\documentclass{article}
\usepackage{mathrsfs}
\usepackage[utf8]{inputenc}
\usepackage[letterpaper,margin=1.0in]{geometry}

\usepackage{amsmath}\allowdisplaybreaks
\usepackage{amsfonts,bm}
\usepackage{amssymb}
\usepackage{algorithm}
\usepackage{algorithmic}
\usepackage{multirow}
\usepackage{booktabs}
\usepackage{xcolor}

\def\eqref#1{equation~(\ref{#1})}

\def\ceil#1{\left\lceil #1 \right\rceil}
\def\floor#1{\left\lfloor #1 \right\rfloor}
\def\1{\bf{1}}

\newcommand{\Norm}[1]{\left\| #1 \right\|}

\def\fO{{\mathcal{O}}}
\def\fP{{\mathcal{P}}}
\def\fQ{{\mathcal{Q}}}

\def\sB{{\mathbb{B}}}

\def\BE{{\mathbb{E}}}

\def\BR{{\mathbb{R}}}
\def\BS{{\mathbb{S}}}

\usepackage{amsthm}
\theoremstyle{plain}

\makeatletter
\def\Ddots{\mathinner{\mkern1mu\raise\p@
\vbox{\kern7\p@\hbox{.}}\mkern2mu
\raise4\p@\hbox{.}\mkern2mu\raise7\p@\hbox{.}\mkern1mu}}
\makeatother

\makeatletter
\newcommand*{\rom}[1]{\expandafter\@slowromancap\romannumeral #1@}
\makeatother

\usepackage{color}
\usepackage{hyperref}
\usepackage{enumerate}
\usepackage{enumitem}
\usepackage{algorithm}
\usepackage{algorithmic}
\usepackage[numbers,sort,compress]{natbib}
\usepackage{url}
\usepackage{microtype}
\usepackage{hyperref}
\usepackage{url}
\usepackage{microtype}
\usepackage{enumitem}
\usepackage{graphicx}

\usepackage{thmtools}
\usepackage{mathtools}
\usepackage{thm-restate}
\usepackage{nicefrac}
\usepackage{enumitem}
\usepackage{threeparttable}
\usepackage{makecell}

\newtheorem{theorem}{Theorem}
\newtheorem{assumption}{Assumption}
\newtheorem{proposition}{Proposition}
\newtheorem{lemma}{Lemma}
\newtheorem{definition}{Definition}
\newtheorem{remark}{Remark}

\title{Zeroth-Order Nonconvex Nonsmooth Optimization with Heavy-Tailed Noise}

\date{}
\author{Zhuanghua Liu \quad\quad\quad Luo Luo}

\begin{document}

\maketitle

\begin{abstract}
This paper considers the nonconvex nonsmooth problem in which the objective function is Lipschitz continuous.
We focus on the stochastic setting where the algorithm can access stochastic function value evaluations with heavy-tailed noise, which is prevalent in many popular machine learning applications.
We propose a stochastic zeroth-order algorithm that refines the framework of online-to-nonconvex conversion by clipping the two-point gradient estimator.
The theoretical analysis shows that our algorithm can find a $(\delta, \epsilon)$-Goldstein stationary point with zeroth-order oracle complexity of
${\mathcal O}(d^{\frac{p}{2(p-1)}}\delta^{-1}\epsilon^{-\frac{2p-1}{p-1}})$, 
where $d$ is the problem dimension and $p\in(1,2]$ is the order of bounded moments. 
Note that our dependence on dimension $d$ matches the best-known results of stochastic zeroth-order optimization for finding the sub-optimal solution of a stochastic convex nonsmooth problem.
In addition, our dependence on accuracy parameters $\delta$ and $\epsilon$ is consistent with that of the best-known stochastic first-order algorithms for stochastic nonconvex nonsmooth problems.
Finally, we conduct numerical experiments to demonstrate the effectiveness of the proposed method.
\end{abstract}

\section{Introduction}
This paper considers the following unconstrained stochastic optimization problem
\begin{equation}\label{obj}
    \min_{x \in \BR^d} f(x) = \BE_{\xi \sim \fP}[F(x; \xi)],    
\end{equation}
where the stochastic component function $F(x;\xi) \colon \BR^d \to \BR$ is $L(\xi)$-Lipschitz continuous with respect to $x$ but possibly nonconvex and nonsmooth.
Such formulation is very popular in machine learning applications, including statistical learning \citep{fan2001variable,zhang2010analysis,mazumder2011sparsenet}, reinforcement learning \citep{mania2018simple,suh2022differentiable,choromanski2018structured}, deep neural networks \citep{nair2010rectified,glorot2011deep,chen2017zoo}, and simulation optimization \citep{hong2006discrete,nelson2010optimization}.

We focus on stochastic zeroth-order optimization where the algorithm can only access the noisy function evaluations.
The primary motivation for such gradient-free oracles arises in the context of simulation optimization \citep{hong2006discrete,nelson2010optimization}, where gradients (first-order information) are expensive or even impossible to evaluate. 
There has been increasing interest in developing stochastic zeroth-order algorithms for nonconvex nonsmooth optimization under the bounded variance assumption \citep{lin2022gradient,chen2023faster,kornowski2024algorithm,liu2024zeroth}.
In particular, the best known stochastic zeroth-order oracle complexity is achieved by applying the technique of online-to-nonconvex conversion, which attains the optimal dependence on the dimension.

It is worth noting that the popular bounded variance assumption adopted in many existing analyses of stochastic nonconvex nonsmooth optimization is rather restrictive and often fails to hold in practice. 
In contrast, heavy-tailed noise has been widely observed in empirical studies across various real-world applications, including training the machine learning models  \citep{simsekli2019tail,zhang2020adaptive,ahn2023linear,garg2021proximal}.
This phenomenon motivates us to adopt the weaker assumption of $p$-th bounded moment on the noise of function value evaluations. Specifically, we assume formulation (\ref{obj}) holds that
\begin{align*}
    \BE[L(\xi)^p] \leq L^p,
\end{align*}
for some constant $L > 0$, where $p \in (1, 2]$ denotes the order of the moment of the noise.
The special case of $p = 2$ leads to the bounded variance, which has been well studied \citep{ghadimi2013stochastic,balasubramanian2022zeroth}.
For a general $p \in (1,2)$, the variance may be unbounded, giving rise to heavy-tailed noise distributions such as the L\'evy $\alpha$-stable distribution.
In such a case, all existing works on stochastic zeroth-order optimization focus on the convex setting \citep{kornilov2023gradient,kornilov2023accelerated,kornilov2024median},
while there is quite a limited understanding of the nonconvex case, which plays a crucial role in modern machine learning theory.

In this paper, we propose an efficient stochastic zeroth-order stochastic algorithm called \underline{z}eroth-\underline{o}rder \underline{c}lipped \underline{o}nline-t\underline{o}-\underline{n}onconvex (ZOCOON) algorithm for solving stochastic nonconvex nonsmooth optimization under the heavy-tailed noise.
Building upon the framework of online-to-nonconvex conversion \citep{cutkosky2023optimal,kornowski2024algorithm}, our algorithm leverages the gradient clipping on the stochastic zeroth-order gradient estimator to mitigate the heavy-tailed noise.
We establish the theoretical guarantee to show that our ZOCOON attains a $(\delta, \epsilon)$-Goldstein stationary point of the objective within the stochastic zeroth-order oracle complexity of $\fO(d^{\frac{p}{2(p-1)}}\delta^{-1} \epsilon^{-\frac{2p-1}{p-1}})$ both in expectation and with high probability.
Notably, the resulting dependence on the dimension $d$ matches the best-known complexity bounds of stochastic zeroth-order methods for convex nonsmooth optimization.
Moreover, the dependence on the accuracy parameters $\delta$ and $\epsilon$ is consistent with that of the best-known stochastic first-order methods under the heavy-tailed noise~\citep{liu2024high}.
We compare the proposed methods with prior work for stochastic nonconvex nonsmooth optimization in Table~\ref{tbl:zo_res}.
We also show that our results extend naturally to the smooth setting, where the resulting $\epsilon$-dependence matches that of the best-known stochastic first-order methods \citep{hubler2024gradient,liunonconvex2025}.
Finally, we perform numerical experiments to validate the effectiveness of the proposed methods.

\begin{table*}[t]
\caption{
We compare the proposed ZOCOON algorithm with baseline methods by showing their stochastic zeroth-order oracle complexity bounds for achieving a $(\delta, \epsilon)$-Goldstein stationary point.
In the case of $p=2$, the complexity of our ZOCOON matches the best-known results of ZOO2N \citep{kornowski2024algorithm}.
}

\label{tbl:zo_res}\vskip -0.2cm
\begin{center}
\begin{sc}
\begin{tabular}{cccccc}
\toprule
Methods   & Complexity & 
$p$ & References
\\
\midrule
~~~~~~~~GFM~~~~~~~~  & $\fO\left(d^{\frac{3}{2}} \delta^{-1} \epsilon^{-4}\right)$ & 2 & \cite{lin2022gradient}
  \\\addlinespace
GFM\textsuperscript{+}   & $\fO\left(d^{\frac{3}{2}} \delta^{-1} \epsilon^{-3}\right)$ & 2 & \cite{chen2023faster} 
  \\\addlinespace
  ZOO2N   & $\fO\left(d \delta^{-1} \epsilon^{-3}\right)$ & 2 & \cite{kornowski2024algorithm}
   \\\addlinespace
 ZOCOON    &  ~~~~$\fO\left(d^{\frac{p}{2(p-1)}}\delta^{-1}\epsilon^{-\frac{2p-1}{p-1}}\right)$~~~~ & ~~(1,2]~~ & Theorem~\ref{thm:high_prob_res} \\\addlinespace
\bottomrule    
\end{tabular}
\end{sc}
\end{center}
\end{table*}

\paragraph{Paper Organization}
In Section~\ref{sec:related_work}, we present a literature review on the convergence analysis of the nonconvex nonsmooth optimization problem and stochastic optimization with heavy-tailed noise.
In Section~\ref{sec:preliminaries}, we formalize the notations and assumptions of our problem and introduce the stationarity of the nonconvex nonsmooth optimization problem.
In Section~\ref{sec:methodology}, we propose our ZOCOON algorithm and provide its convergence analysis.
In Section~\ref{sec:experiments}, we conduct numerical experiments to demonstrate the improved effectiveness of the proposed methods.
Finally, we conclude this work in Section~\ref{sec:conclusion}.

\section{Related Work}\label{sec:related_work}


For general nonconvex nonsmooth optimization,
\citet{zhang2020complexity} introduce the notion of $(\delta, \epsilon)$-Goldstein stationary points to describe the approximate stationarity of the Lipschitz continuous objective.
They also provide an algorithm based on the non-standard oracle of Hadamard directional derivative to find $(\delta, \epsilon)$-Goldstein stationary points with a dimension-independent convergence rate.
The subsequent works apply the perturbation techniques to design non-asymptotic convergent algorithms with the standard first-order oracle~\citep{davis2021gradient, tian2022finite}.
Moreover, \citet{tian2022no, jordan2023deterministic} proved that randomization is necessary to achieve dimension-free complexity bounds for the general nonconvex nonsmooth optimization.
Building on these insights, \citet{cutkosky2023optimal} proposed an online-to-nonconvex framework that achieves the optimal first-order oracle complexity bound for finding $(\delta, \epsilon)$-Goldstein stationary points.
Besides, another line of work consider finding the near approximate stationary of the weakly convex objectives \citep{davis2019proximally, davis2019stochastic}.
However, a broad class of nonconvex nonsmooth functions, including the objective function in training deep neural networks with ReLU activations, does not satisfy the weak convexity.

In the scenario that the first-order information is unavailable,
\citet{lin2022gradient} designed the gradient-free method by considering the smoothed surrogate of the objective, achieving the $(\delta, \epsilon)$-Goldstein stationary points with the stochastic zeroth-order oracle complexity of~$\fO(d^{\frac{3}{2}}\delta^{-1}\epsilon^{-4})$.
Later, \citet{chen2023faster} introduced the variance reduction to attain an improved upper bound of~$\fO(d^{\frac{3}{2}}\delta^{-1}\epsilon^{-3})$.
Furthermore, \citet{kornowski2024algorithm} extended the framework of online-to-nonconvex conversion  \citep{cutkosky2023optimal} to achieve the complexity bound of $\fO(d\delta^{-1}\epsilon^{-3})$, which is optimal with respect to the dimension.
It is worth noting that all existing zeroth-order optimization methods for finding $(\delta, \epsilon)$-Goldstein stationary points require the boundedness of the second-order moment, while the more general setting of the heavy-tailed noise has not been studied.


For stochastic nonconvex optimization with heavy-tailed noise, most of the existing work focus on the smooth case.
For example, \citet{zhang2020adaptive, sadiev2023high, nguyen2023high} provided the in-expectation and the high-probability convergence guarantees for clipped stochastic gradient descent.
Recently, \citet{hubler2024gradient, liunonconvex2025,sun2024gradient} investigated the convergence of normalized stochastic gradient descent methods and showed this class of methods attain the optimal stochastic first-order oracle complexity for finding $(\delta, \epsilon)$-Goldstein stationary points.
For the nonsmooth setting, \citet{liu2024high} proposed an efficient stochastic first-order method by combining the gradient clipping with the online-to-nonconvex framework \citep{cutkosky2023optimal}, while the optimality for the nonconvex nonsmooth optimization with heavy-tailed noise is still an open problem even for the first-order oracle.
Additionally, existing zeroth-order nonsmooth optimization methods for heavy-tailed noise are limited to convex problems \citep{kornilov2023accelerated}.

\section{Preliminaries}\label{sec:preliminaries}

We impose the following assumptions for our stochastic nonconvex nonsmooth optimization problem (\ref{obj}).

\begin{assumption}\label{asm:Lip}
We suppose the stochastic component $F(\cdot, \xi)$ in problem (\ref{obj}) is $L(\xi)$-Lipschitz for a given $\xi$, i.e., it holds that
\begin{align*}
|F(x; \xi) - F(y; \xi)| \leq L(\xi) \Norm{x - y}
\end{align*}
for all $x,y\in\BR^d$, where $L(\xi)$ has the bounded $p$th moment such that 
$\BE_{\xi} [L(\xi)^p] \leq L^p$ for some $L>0$ and $p \in (1,2]$.
\end{assumption}

In the case of $p<2$, the noise characterized by Assumption~\ref{asm:Lip} is said to be heavy-tailed, and its variance may be infinite. 
In the case of $p=2$, the assumption reduces to the standard assumption of bounded second-order moment.
Based on Assumption \ref{asm:Lip}, we can show that the objective  $f:\BR^d\to\BR$ is $L$-Lipschitz on $\BR^d$ by using Jensen's inequality.

In addition, we assume that the objective function satisfies the following mild regularity condition and admits a proper lower bound, which are standard in the study of nonconvex nonsmooth optimization \citep{cutkosky2023optimal,kornowski2024algorithm,liu2024high}.

\begin{assumption}\label{asm:well}
    We suppose the objective function $f(\cdot)$ in problem (\ref{obj}) is well-behaved, i.e., it holds 
    \begin{align*}
        f(y) - f(x) = \int_{0}^{1} \langle \nabla f(x + t(y-x)), y-x \rangle\,{\rm d}t.
    \end{align*}
    for all $x, y \in \BR^d$.
\end{assumption}

\begin{assumption}\label{asm:lower}
    The objective $f(\cdot)$ in problem (\ref{obj}) is lower-bounded, i.e., it holds $f^* \coloneqq \inf_{x \in \BR^d} f(x) > - \infty$.
\end{assumption}

We define the generalized directional derivative and the generalized gradient for the Lipschitz function as follows. 

\begin{definition}[\citet{clarke1990optimization}]
    Given a point $x \in \BR^d$ and a direction $v \in \BR^d$, the generalized directional derivative of a Lipschitz function $f:\BR^d\to\BR$ is defined as 
    \begin{align*}
        D f(x; v) \coloneqq \lim \sup_{y\to x, t \downarrow 0} \frac{f(y + tv) - f(y)}{t}.
    \end{align*}
    In addition, the Clarke subdifferential of $f:\BR^d\to\BR$ is defined as $\partial f(x) \coloneqq \{g \in \BR^d \colon \langle g, v \rangle \leq D f(x; v), \forall v \in \BR^d \}$ 
    and each $g \in \partial f(x)$ is called a generalized gradient of $f$.
\end{definition}

Given the definitions of the Clarke subdifferential and generalized gradients, the convergence criterion for solving the nonconvex nonsmooth optimization problem $\min_{x\in \BR^d} f(x)$ can be characterized by finding an $\epsilon$-Clarke stationary point~$x$ of the function $f$ such that
\begin{equation*}
    \min\{ \Norm{g}\colon g\in \partial f(x)\} \leq \epsilon.
\end{equation*}
However, \citet{zhang2020complexity} demonstrated that this task is intractable.
To address this issue, they introduced a refined notion of approximate stationarity based on the Goldstein $\delta$-subdifferential, 
which is a convex hull of the generalized gradients at the points in the $\delta$-neighbourhood.

\begin{definition}[\citet{goldstein1977optimization}]
For a given Lipschitz continuous function $f \colon \BR^d \to \BR$ and some $\delta > 0$, 
the Goldstein $\delta$-subdifferential of $f$ at the point $x\in\BR^d$ is defined as
    \begin{equation*}
        \partial_{\delta} f(x) \coloneqq {\rm conv}\left(\cup_{y \in \sB(x, \delta)} \partial f(y) \right),
    \end{equation*}
    where
    $\sB(x, \delta) = \{y : \Norm{y - x} \leq \delta \}$.
\end{definition}

Accordingly, a refined notion for approximate stationarity in nonconvex nonsmooth problem is defined as follows.

\begin{definition}[\citet{zhang2020complexity}]
   Given a Lipschitz continuous function $f \colon \BR^d \to \BR$, a point $x \in \BR^d$, and $\delta > 0$, we denote 
   \begin{align*}
       \Norm{\nabla f(x)}_{\delta} \coloneqq \min\{\Norm{g}: g \in \partial_\delta f(x)\}.
   \end{align*}
   We say the point $x$ is a $(\delta, \epsilon)$-Goldstein stationary point of $f\colon \BR^d \to \BR$ if it holds 
   \begin{align*}
       \Norm{\nabla f(x)}_{\delta} \leq \epsilon.
   \end{align*}
\end{definition}
The task of finding a $(\delta, \epsilon)$-Goldstein stationary point is a standard criterion in the non-asymptotic convergence analysis for nonconvex nonsmooth optimization.

The randomized smoothing is a widely used technique in nonsmooth analysis \citep{duchi2012randomized} and zeroth-order optimization \citep{nesterov2017random}.
Formally speaking, given a $L$-Lipschitz function $f$, we define its smoothed surrogate function as 
\begin{align}\label{eq:f_delta}
    f_{\delta}(x) = \BE_{u \sim \fQ}[f(x + \delta u)],
\end{align}
where $\fQ$ is a uniform distribution on $\sB(0, \delta) = \{y:\Norm{y} \leq \delta \}$.
Additionally, the function $f_\delta$ has the following properties.

\begin{lemma}[\citet{yousefian2012stochastic}]
    Suppose the function $f:\BR^d\to\BR$ is $L$-Lipschitz continuous, then its smoothed surrogate $f_{\delta}:\BR^d\to\BR$ holds that 
    \begin{enumerate}
        \item $|f_{\delta}(x) - f(x)| \leq \delta L$ for all $x\in\BR^d$; 
        \item  $f_{\delta}$ is differentiable everywhere and $L$-Lipschitz with $({c L \sqrt{d}}/{\delta})$-Lipschitz gradient for some constant $c > 0$;
        \item $\nabla f_{\delta}(x) \in \partial_{\delta} f(x)$ for all $x \in \BR^d$.
    \end{enumerate}
    \label{smooth_lemma}
\end{lemma}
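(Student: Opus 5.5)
Note first a notational point: in \eqref{eq:f_delta} the perturbation is $\delta u$ with $u$ uniform on the unit ball, so $x+\delta u$ ranges over $\sB(x,\delta)$. I will argue under this convention.

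For item 1, I will use Lipschitz continuity inside the expectation:
\begin{align*}
|f_\delta(x)-f(x)| \le \BE_u|f(x+\delta u)-f(x)| \le L\delta\,\BE\Norm{u}\le \delta L .
\end{align*}
Lipschitz continuity of $f_\delta$ with constant $L$ follows the same way, since
\begin{align*}
|f_\delta(x)-f_\delta(y)|\le \BE_u|f(x+\delta u)-f(y+\delta u)|\le L\Norm{x-y}.
\end{align*}

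For item 2, I will rewrite $f_\delta = f * \phi_\delta$, where $\phi_\delta$ is the uniform density on the $\delta$-ball. By Rademacher, $f$ is differentiable almost everywhere with $\Norm{\nabla f}\le L$. Dominated convergence then lets me differentiate under the integral, giving
\begin{align*}
\nabla f_\delta(x)=\BE_u[\nabla f(x+\delta u)].
\end{align*}
For the gradient Lipschitz constant, I will use the divergence theorem to move the derivative onto the indicator:
\begin{align*}
\nabla f_\delta(x)=\frac{d}{\delta}\,\BE_{v\sim \mathrm{Unif}(\BS^{d-1})}[f(x+\delta v)v].
\end{align*}
Then, for $x,y$, I will bound
\begin{align*}
\Norm{\nabla f_\delta(x)-\nabla f_\delta(y)}
= \frac{d}{\delta}\,\Norm{\BE_v[(f(x+\delta v)-f(y+\delta v))v]}.
\end{align*}
The crude estimate here gives $dL\Norm{x-y}/\delta$, which is too weak. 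This is the main obstacle: obtaining $\sqrt d$ rather than $d$.

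I will try first the volume-difference argument. Write $f_\delta(x)-f_\delta(y)$ through the symmetric difference of the balls $\sB(x,\delta)$ and $\sB(y,\delta)$, or equivalently bound the total variation of the indicator's translates. This gives
\begin{align*}
\Norm{\nabla f_\delta(x)-\nabla f_\delta(y)}\le L\cdot \frac{\mathrm{vol}(\sB(x,\delta)\triangle \sB(y,\delta))}{\mathrm{vol}(\sB(0,\delta))}.
\end{align*}
The symmetric difference is at most $\Norm{x-y}\,\mathrm{vol}_{d-1}(\text{unit }(d-1)\text{-ball})\,\delta^{d-1}\cdot 2$. The ratio of the $(d-1)$-ball to the $d$-ball volumes is $\Theta(\sqrt d)$, which yields the constant $cL\sqrt d/\delta$. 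This is the step where care is needed. If it fails, I will fall back on citing the known result of Yousefian et al.

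For item 3, I will use the representation $\nabla f_\delta(x)=\BE_u[\nabla f(x+\delta u)]$ with $u$ uniform on the unit ball. At almost every point, $\nabla f(x+\delta u)\in\partial f(x+\delta u)\subseteq \bigcup_{y\in\sB(x,\delta)}\partial f(y)$. The expectation of a random vector supported in a bounded set lies in the closed convex hull of that set. That union is compact, since $\partial f$ is upper semicontinuous with compact values and $\sB(x,\delta)$ is compact. Hence its convex hull is closed, so $\nabla f_\delta(x)\in\partial_\delta f(x)$.
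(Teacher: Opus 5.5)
Your proposal is correct, and the volume step you were unsure about does go through, so no fallback to the citation is needed. The paper gives no proof of this lemma; it is simply quoted from \citet{yousefian2012stochastic}. Your argument is therefore a self-contained proof in place of a reference, and it follows the standard route for this fact. You represent $\nabla f_\delta(x)=\frac{1}{\mathrm{vol}(\sB(0,\delta))}\int_{\sB(x,\delta)}\nabla f(z)\,{\rm d}z$ and bound the symmetric difference of two translated balls. This yields the explicit constant $\frac{2\omega_{d-1}}{\omega_d}\cdot\frac{L}{\delta}$, where $\omega_k$ is the volume of the unit $k$-ball. Equivalently, the constant is $\kappa\frac{d!!}{(d-1)!!}\frac{L}{\delta}$ with $\kappa=1$ for odd $d$ and $\kappa=2/\pi$ for even $d$. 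Gautschi's inequality gives $\frac{\omega_{d-1}}{\omega_d}=\frac{\Gamma(d/2+1)}{\sqrt{\pi}\,\Gamma(d/2+1/2)}<\sqrt{\frac{d+2}{2\pi}}$, so $c=\sqrt{6/\pi}$ works for every $d\ge 1$. Because you work through Rademacher's theorem, nothing uses convexity, so the argument covers arbitrary Lipschitz $f$ as the paper requires. Item 3 then comes out of the same averaging formula.

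A few points to tighten when you write it out.
\begin{enumerate}
\item Apply the symmetric-difference estimate to $\nabla f_\delta(x)-\nabla f_\delta(y)$ through the averaging formula. Your wording applies it to $f_\delta(x)-f_\delta(y)$, which is not what you need.
\item Justify $\mathrm{vol}(\sB(x,\delta)\setminus\sB(y,\delta))\le\Norm{x-y}\,\omega_{d-1}\delta^{d-1}$ by slicing along lines parallel to $x-y$. Each chord of length $\ell$ loses a segment of length $\min(\ell,\Norm{x-y})$, and the chords are indexed by a $(d-1)$-ball of radius $\delta$.
\item Drop the sphere representation via the divergence theorem; it is never used in your final argument. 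Fréchet differentiability follows from the dominated-convergence (Gateaux) formula plus Lipschitz continuity of $f_\delta$ in finite dimensions, or from the gradient continuity your volume bound already gives.
\item In item 3, state why $\nabla f(z)\in\partial f(z)$ at points of differentiability: $Df(z;v)\ge\lim_{t\downarrow 0}\frac{f(z+tv)-f(z)}{t}=\langle\nabla f(z),v\rangle$.
\end{enumerate}
Your reading of the smoothing distribution as uniform on the unit ball is the right one. The literal definition in \eqref{eq:f_delta} (uniform on $\sB(0,\delta)$, then scaled by $\delta$) is a typo, and only your reading is consistent with Lemma~\ref{lemma:zero_order_prop}.
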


It is worth noting that Lemma~\ref{smooth_lemma} implies the problem of finding a $(\delta,\epsilon)$-Goldstein stationary point of the Lipschitz continuous function $f$ can be reduced to 
finding an $\epsilon$-stationary point of the smooth function $f_\delta$.

\section{Methodology}\label{sec:methodology}
In this section, we introduce a stochastic zeroth-order method for solving the nonconvex nonsmooth optimization problem under heavy-tailed noise. We also provide convergence guarantees for our method both in expectation and with high probability.

\subsection{The Algorithm and Main Results}\label{sec:algorithm}













\begin{algorithm}[tb]
    \caption{Zeroth-Order Clipped Online-to-Nonconvex}
    \label{alg:o2n}
    \textbf{Input}: Initial point $x_0$, number of restarts $K$, round length~$T$, clipping parameter $\tau$, smoothing parameter $\delta' > 0$, and domain radius $D$
   \begin{algorithmic}[1] 
        \STATE Set $M = KT$, $\eta = D / \tau$, $\Delta_1 = 0$.\\[0.12cm]
        \FOR{$n = 1, \dots, M$}
        \STATE Update $x_n = x_{n-1} + \Delta_n$.\\[0.12cm]
        \STATE Sample $s_n \in [0, 1]$.\\[0.12cm]
        \STATE Update $w_n = x_{n-1} + s_n \Delta_n$.\\[0.12cm]
        \STATE Sample $\xi_n\sim \fP$.\\[0.12cm]
        \STATE Set $g_n = {\rm GradientEstimator}(w_n, \delta', \xi_n)$.\\[0.12cm]
        \STATE Update $\hat{g}_n = {\rm clip}(g_n, \tau)$.\\[0.12cm]
        \STATE Update $\Delta_{n+1} = \Pi_{\sB(0, D)}[\Delta_n - \eta \hat{g}_n]$.\\[0.12cm]
        \ENDFOR\\[0.12cm]
        \STATE Update $w_t^k = w_{(k-1)T + t}$ for $k \in [K]$ and $t \in [T]$.\\[0.12cm]
        \STATE Update ${\Bar{w}^k = \frac{1}{T} \sum_{t=1}^T w_t^k}$.\\[0.12cm]
        \STATE \textbf{Return:} $w_{\rm out} \sim {\rm Unif}(\Bar{w}^1, \dots, \Bar{w}^K)$.    \end{algorithmic}
\end{algorithm}

In this section, we present a stochastic zeroth-order algorithm for nonconvex nonsmooth optimization under heavy-tailed noise, and establish its convergence guarantees.
We first present the details of our method, termed the \underline{Z}eroth-\underline{O}rder \underline{C}lipped \underline{O}nline-t\underline{o}-\underline{N}onconvex (ZOCOON) algorithm, for solving the objective function~(\ref{obj}) in Algorithm~\ref{alg:o2n}. The design of our method follows the online-to-nonconvex framework proposed by~\citep{cutkosky2023optimal}, with the key difference lying in the gradient computation step.
Since the exact gradient oracle is unavailable in our setting, our ZOCOON employs a two-point stochastic gradient estimator (Algorithm \ref{alg:grad}) to establish the gradient estimator, which is defined as follows.

\begin{definition}
    Given a stochastic function  $F(\cdot; \xi)\colon \BR^d \to \BR$, we denote its zeroth-order stochastic gradient estimator at the point $x \in \BR^d$ by
    \begin{equation}
        \hat{g}(x; w, \xi) = \frac{d}{2 \delta'} (F(x + \delta' w; \xi) - F(x - \delta' w; \xi))w,
        \label{eq:two_point}
    \end{equation}
    where $w$ is sampled from uniform distribution on the unit sphere $\BS^{d-1}=\{y\in\BR^{d}:||y||=1\}$.
\end{definition}
\begin{remark}
The two-point estimator~(\ref{eq:two_point}) provides an unbiased stochastic estimate for the gradient of the smoothed surrogate function~$f_{\delta'}(x)$ \citep{agarwal2010optimal}.    
\end{remark}

To mitigate the effect of heavy-tailed noise, we additionally incorporate a clipping step into the framework of online-to-nonconvex conversion (Line 8 of Algorithm \ref{alg:o2n}).
Specifically, the clipping operator with the parameter $\tau$ is defined as
\begin{align*}
    {\rm clip}(g, \tau) =
    \begin{cases}
        \min\left\{1, \dfrac{\tau}{\Norm{g}}\right\}g, & g\neq0, \\
        0, & g=0,
    \end{cases}
\end{align*}
which prevents excessively effect of large stochastic gradients arising from heavy-tailed noise.


\begin{algorithm}[tb]
    \caption{ZO-GradientEstimator($x$, $\delta'$, $\xi$)}
    \label{alg:grad}
    \textbf{Input}: Point $x \in \BR^d$, smoothing parameter $\delta' > 0$, and random variable $\xi$
   \begin{algorithmic}[1] 
        \STATE Sample $w \sim {\rm Unif}(\BS^{d-1})$\\[0.15cm]
        \STATE Evaluate $F(x + \delta' w; \xi)$ and $F(x - \delta' w ; \xi)$\\[0.15cm]
        \STATE Update $g = \frac{d}{2 \delta'}(F(x + \delta' w; \xi) - F(x - \delta' w; \xi))w$\\[0.15cm]
        \STATE \textbf{Return:} $g$.
\end{algorithmic}
\end{algorithm}

We now present our main theoretical result, which characterizes the convergence guarantee in-expectation for the proposed ZOCOON.

\begin{theorem}
Under Assumptions \ref{asm:Lip}, \ref{asm:well}, and \ref{asm:lower},
we run our ZOCOON (Algorithm \ref{alg:o2n}) by implementing the subroutine of ${\rm GradientEstimator}(w_n, \delta', \xi_n)$ by Algorithm \ref{alg:grad}
and taking 
\begin{align*}
    &   T = \min\left(\ceil{\left(\frac{\delta M L (d^{\frac{p}{2}} + 1)^{\frac{1}{p}}}{2\delta L + 2 \Delta}\right)^{\frac{p}{2p-1}}}, \frac{M}{2}\right),\\ 
        & K = \floor{\frac{M}{T}}, ~ D=\frac{\delta}{2T}, ~ \tau = T^{\frac{1}{p}} L (d^{\frac{p}{2}} + 1)^{\frac{1}{p}}, ~\delta'=\frac{\delta}{2}
\end{align*}
for some $\delta,\epsilon\in(0,1)$ and $\Delta\geq f(x_0)-f^*$,
then the output satisfies  that
\begin{align*}
    & \BE [\Norm{\nabla f(w_{\rm out})}_{\delta}] \\ 
    \leq &\max\!\left(\frac{18 (\Delta\!+\!\delta L )^{\frac{p-1}{2p-1}} (d^{\frac{p}{2}} L^p\!+\!L^p)^{\frac{1}{2p - 1}}}{(\delta M)^{\frac{p-1}{2p-1}}}, \frac{16L (d^{\frac{p}{2}}\!+\!1)^{\frac{1}{p}}}{M^{\frac{p-1}{p}}}\right) \\
    & + \frac{2\Delta + 2 \delta L}{\delta M}
\end{align*}
\label{thm:in_expect}
\end{theorem}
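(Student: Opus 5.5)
The plan is to follow the online-to-nonconvex conversion of \citet{cutkosky2023optimal}, as adapted to clipped heavy-tailed gradients by \citet{liu2024high}, but applied to the smoothed surrogate $f_{\delta'}$ with $\delta'=\delta/2$.

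\textbf{Step 1 (telescoping).} Choose $s_n\sim{\rm Unif}[0,1]$. By Assumption~\ref{asm:well} applied to $f_{\delta'}$ (which is differentiable by Lemma~\ref{smooth_lemma}),
\begin{align*}
\BE[f_{\delta'}(x_n)-f_{\delta'}(x_{n-1})]=\BE\langle\nabla f_{\delta'}(w_n),\Delta_n\rangle .
\end{align*}
Summing over $n$ and using $|f_{\delta'}-f|\le\delta' L$ together with Assumption~\ref{asm:lower} gives
\begin{align*}
\sum_n\BE\langle\nabla f_{\delta'}(w_n),\Delta_n\rangle\ge -(\Delta+\delta L).
\end{align*}
Fix the comparators $u^k=-D\sum_{t}\nabla f_{\delta'}(w^k_t)/\|\sum_t\nabla f_{\delta'}(w^k_t)\|$ on each block and add and subtract $\langle\nabla f_{\delta'}(w_n),u^k\rangle$. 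This yields
\begin{align*}
DT\sum_k\BE\Big\|\tfrac1T\sum_t\nabla f_{\delta'}(w^k_t)\Big\|\le \Delta+\delta L+\sum_k \mathrm{Reg}_T(u^k)+\sum_n\BE\langle \nabla f_{\delta'}(w_n)-\hat g_n,\Delta_n-u^k\rangle .
\end{align*}
The regret is that of projected OGD with linear losses $\hat g_n$, which satisfy $\|\hat g_n\|\le\tau$, restarted every $T$ rounds. With $\eta=D/\tau$ it is at most $O(D\tau\sqrt T)$ per block; sharper, $\frac{D^2}{\eta}+\eta\sum\|\hat g_n\|^2$, using $\BE\|\hat g_n\|^2\le\tau^{2-p}\BE\|g_n\|^p$.

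\textbf{Step 2 (moment of the estimator).} I need $\BE\|g_n\|^p\le c\,(d^{p/2}+1)L^p$. Conditioning on $\xi$, the map $w\mapsto F(x+\delta' w;\xi)$ is $\delta' L(\xi)$-Lipschitz on the sphere. Concentration on $\BS^{d-1}$, as in \citet{shamir2017optimal}, gives $\BE_w|F(x+\delta'w)-F(x-\delta'w)|^p\lesssim(\delta' L(\xi)/\sqrt d)^p$, so $\BE\|g\|^p\lesssim d^{p/2}L^p$. This is where the $d^{p/2}$ factor, and hence the $d^{\frac{p}{2(p-1)}}$ complexity, comes from. Note that $\BE[g_n\mid w_n]=\nabla f_{\delta'}(w_n)$.

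\textbf{Step 3 (clipping bias and noise).} Split
\begin{align*}
\hat g_n-\nabla f_{\delta'}(w_n)=(\hat g_n-\BE_n\hat g_n)+(\BE_n\hat g_n-\nabla f_{\delta'}(w_n)).
\end{align*}
The martingale part has zero expectation against $\Delta_n$, which is predictable, but not against $u^k$. So I bound it by
\begin{align*}
D\,\BE\Big\|\sum_t(\hat g_t-\BE\hat g_t)\Big\|\le D\sqrt{T\tau^{2-p}\BE\|g\|^p}.
\end{align*}
The bias term is at most $\BE\|g\|^p/\tau^{p-1}$ per step, times $2D$. With $\tau=T^{1/p}L(d^{p/2}+1)^{1/p}$, every error term per block becomes $O(D\,T^{1/p}L(d^{p/2}+1)^{1/p})$.

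\textbf{Step 4 (conclusion).} Divide by $DTK$, using $D=\delta/(2T)$ and $K\ge M/(2T)$. Since $\|w^k_t-\bar w^k\|\le TD\le\delta/2$ and $\nabla f_{\delta'}(w^k_t)\in\partial_{\delta'}f(w^k_t)\subseteq\partial_\delta f(\bar w^k)$ (Lemma~\ref{smooth_lemma}), the block average $\frac1T\sum_t\nabla f_{\delta'}(w^k_t)$ lies in $\partial_\delta f(\bar w^k)$. This gives
\begin{align*}
\BE\|\nabla f(w_{\rm out})\|_\delta\lesssim\frac{T(\Delta+\delta L)}{\delta M}+\frac{L(d^{p/2}+1)^{1/p}}{T^{(p-1)/p}}+\frac{\Delta+\delta L}{\delta M}.
\end{align*}
Balancing the first two terms at the stated $T$ gives the first term in the max. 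The cap $T=M/2$ gives the second term. The ceiling and floor contribute the additive $\frac{2\Delta+2\delta L}{\delta M}$.

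The main obstacle is Step 2 combined with Step 3: obtaining the $p$-th moment bound with the right $d^{p/2}$ scaling under a merely random Lipschitz constant, and keeping the comparator-correlated noise term under control without a second moment. I would handle the latter through the clipped second moment $\tau^{2-p}\BE\|g\|^p$.
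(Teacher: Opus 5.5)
Your argument is correct but organized differently from the paper's. The paper is modular. It proves a general clipped online-to-nonconvex bound (Lemma~\ref{thm:o2n_expect}) for any $L$-Lipschitz $h$ with an unbiased estimator of bounded central $p$-th moment. It then plugs in $h=f_{\delta/2}$ and $\sigma=\sqrt{d}L/2^{1/4}$ from Lemma~\ref{lemma:zero_order_prop}, passes from $\partial_{\delta/2}f_{\delta/2}$ to $\partial_\delta f$ via Lemma~\ref{lemma:nonsmooth_smooth_conversion}, and bounds $\Delta_{f_{\delta/2}}\le\Delta+\delta L$.

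You instead unroll the conversion directly on $f_{\delta'}$. You re-derive the $d^{p/2}$ moment bound by sphere concentration, which is essentially the content of Lemma~\ref{lemma:zero_order_prop}. You use the elementary clipping inequalities with the non-central moment $\BE\|g\|^p$ instead of Lemma~\ref{lemma:clip_grad}. You also replace Lemma~\ref{lemma:nonsmooth_smooth_conversion} by the inclusion $\sB(w_t^k,\delta/2)\subseteq\sB(\bar w^k,\delta)$.

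The substantive difference is your Step 3. The paper's proof of Lemma~\ref{thm:o2n_expect} asserts $\BE[A_5]=0$. But $A_5$ contains $\langle\BE_{z_n}[\hat g_n]-\hat g_n,u_n\rangle$, and $u^k$ depends on the later iterates of its block, hence on $\hat g_n$, so this part is not mean zero. Your martingale bound $D\sqrt{T\tau^{2-p}\BE\|g\|^p}=O(D\tau)$ per block handles it without changing the rate; it is the analogue of the $\sigma/\sqrt{T}$ term in \citet{cutkosky2023optimal}. The paper's route buys reusability: the same reduction, with Lemma~\ref{thm:o2n_high_prob} in place of Lemma~\ref{thm:o2n_expect}, gives Theorem~\ref{thm:high_prob_res}. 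Yours is self-contained and treats the comparator dependence correctly.

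You track only orders, so a few adjustments are needed to get the stated constants $18$, $16$ and the additive $\frac{2\Delta+2\delta L}{\delta M}$:
\begin{itemize}
\item Use the explicit constant of Lemma~\ref{lemma:zero_order_prop} rather than a generic concentration constant.
\item Take $KT=M$, as Algorithm~\ref{alg:o2n} does; your $K\ge M/(2T)$ doubles the additive term.
\item Use $\|u^k-\Delta_1^k\|\le 2D$ in the regret, since Algorithm~\ref{alg:o2n} never resets $\Delta_n$ between blocks.
\end{itemize}
The extra comparator-noise term is at most $\sqrt{2}\,D\tau$ per block. It still fits within the stated constants because the factors $2\ge 2^{(p-1)/(2p-1)}$ and $2\ge 2^{(p-1)/p}$ used in the paper are loose.
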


Theorem~\ref{thm:in_expect} implies running ZOCOON by taking 
\begin{align}
     M = \fO\left(\frac{ (\Delta + \delta L ) d^{\frac{p}{2(p-1)}} L^{\frac{p}{p-1}} }{\delta \epsilon^{\frac{2p-1}{p-1}}}\right),
\label{number_of_sample_expect}
\end{align}
can guarantee the output satisfies $\BE[\Norm{\nabla f(w_{\rm out})}_{\delta}] \leq \epsilon$,
i.e., the point $w_{\mathrm{out}}$ is a $(\delta, \epsilon)$-Goldstein stationary point of the objective function $f$ in expectation.
In the special case of bounded second-order moment (i.e., $p=2$), the resulting complexity of ZOCOON matches that of the best-known stochastic zeroth-order methods provided by~\citet{kornowski2024algorithm}.

Besides the above in-expectation results for achieving the small $\Norm{\nabla f(w_{\rm out})}_{\delta}$, 
we also establishes a high-probability theoretical guarantee  as follows.

\begin{theorem}
    Under Assumptions \ref{asm:Lip}, \ref{asm:well}, and \ref{asm:lower},
    we run our ZOCOON (Algorithm \ref{alg:o2n}) by implementing the subroutine of ${\rm GradientEstimator}(w_n, \delta', \xi_n)$ by Algorithm \ref{alg:grad}
    and taking 
    \begin{align*}
        & T = \min\left(\ceil{\left(\frac{A M \delta}{2\delta L + 2 \Delta}\right)^{\frac{p}{2p-1}}}, \frac{M}{2}\right), ~ K = \floor{\frac{M}{T}},\\
        & D = \frac{\delta}{2 T}, ~~ \tau = T^{\frac{1}{p}} L (d^{\frac{p}{2}} + 1)^{\frac{1}{p}}  \left(\log \left(\frac{2K}{q}\right)\right)^{-\frac{1}{p}},
    \end{align*}
    for some $\delta,\epsilon\in(0,1)$ and $\Delta\geq f(x_0)-f^*$,
    then with probability at least $1-q$, the output satisfies that
    \begin{align*}
        &\Norm{\nabla f(w_{\rm out})}_\delta \\
        \leq & \frac{4 \delta L \! + \! 4 \Delta}{\delta M} \! + \!\frac{2 B}{\sqrt{M}} \! + \! \max\left(\frac{8 A^{\frac{p}{2p-1}} (\Delta \! + \! \delta L)^{\frac{p-1}{2p-1}}}{(\delta M)^{\frac{p-1}{2p-1}}}, \frac{4A}{M^{\frac{p-1}{p}}}\right),
    \end{align*}
    where
    \begin{align*}
        & A = L (d^{\frac{p}{2}} + 1)^{\frac{1}{p}} \left(\log\left(\frac{2K}{q}\right)\right)^{-\frac{1}{p}}\left(8 + \frac{29}{2} \log\left(\frac{2K}{q}\right) \right), \\ 
        & B = \sqrt{8 \log\left(\frac{6}{q}\right)}.
    \end{align*}
    \label{thm:high_prob_res}
\end{theorem}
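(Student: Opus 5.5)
The plan is to reproduce the online-to-nonconvex conversion argument of \citet{cutkosky2023optimal} and \citet{liu2024high}, applied to the smoothed surrogate $f_{\delta'}$ with $\delta'=\delta/2$, and to replace every in-expectation step by a high-probability bound for the clipped two-point estimator.

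\textbf{Step 1 (telescoping).} By Assumption~\ref{asm:well} applied to $f_{\delta'}$ and the uniform sampling of $s_n$,
\[
f_{\delta'}(x_M)-f_{\delta'}(x_0)=\sum_{n}\langle \nabla f_{\delta'}(w_n),\Delta_n\rangle + (\text{martingale term}).
\]
Fix a comparator $u^k=-D\,\bar g^k/\|\bar g^k\|$ for each block $k$, where $\bar g^k=\frac1T\sum_t\nabla f_{\delta'}(w^k_t)$. Decompose
\[
\langle\nabla f_{\delta'}(w_n),\Delta_n\rangle=\langle \hat g_n,\Delta_n-u^k\rangle+\langle \nabla f_{\delta'}(w_n)-\hat g_n,\Delta_n-u^k\rangle+\langle \nabla f_{\delta'}(w_n),u^k\rangle .
\]
The last term summed over a block equals $-DT\|\bar g^k\|$. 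Since all $w^k_t$ lie within $DT=\delta/2$ of $\bar w^k$, Lemma~\ref{smooth_lemma} gives $\bar g^k\in\partial_{\delta}f(\bar w^k)$. Rearranging and using $|f_{\delta'}-f|\le\delta' L$ yields
\[
\frac1K\sum_k\|\nabla f(\bar w^k)\|_\delta\le \frac{\Delta+\delta L}{DM}+\frac{1}{DM}\Big(\text{regret}+\text{error}+\text{martingale}\Big).
\]
With $D=\delta/(2T)$ this produces the $(4\delta L+4\Delta)/(\delta M)$ term.

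\textbf{Step 2 (regret and martingale terms).} Projected OGD with $\eta=D/\tau$ and $\|\hat g_n\|\le\tau$ has per-block regret at most $O(D\tau\sqrt T)$. The martingale term from the random $s_n$ (and the $\langle\nabla f_{\delta'},\cdot\rangle$ fluctuations) has bounded increments $O(LD)$, so Azuma--Hoeffding gives the $B/\sqrt M$ term with probability $1-q/3$.

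\textbf{Step 3 (clipping error; the main obstacle).} I need a high-probability bound on $\sum_t\langle \nabla f_{\delta'}(w_t)-\hat g_t,\Delta_t-u^k\rangle$ within each block. Split $\nabla f_{\delta'}-\hat g_t=(\BE_t\hat g_t-\hat g_t)+(\nabla f_{\delta'}-\BE_t\hat g_t)$.

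First I control the moments. By Assumption~\ref{asm:Lip},
\[
\|g_t\|\le \frac{d}{2\delta'}\,L(\xi)\,\big|\langle w,\cdot\rangle\big|\text{-type bound}.
\]
Using concentration on the sphere (the standard estimate $\BE\|g\|^p\lesssim d^{p/2}L^p$, extended from the $p=2$ case of \citet{kornowski2024algorithm} by Hölder), I get $\BE\|g_t\|^p\le c\,L^p(d^{p/2}+1)$. This is the step where the factor $(d^{p/2}+1)^{1/p}$ enters; I would spend the most care checking this moment bound.

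Given the moment bound, the bias satisfies $\|\nabla f_{\delta'}-\BE\hat g_t\|\le \BE\|g_t\|^p/\tau^{p-1}$. For the bounded-difference part I would apply Freedman's inequality: each increment is bounded by $2D\tau$ and the conditional variance is at most $D^2\tau^{2-p}\BE\|g\|^p$. This gives a bound of order
\[
D\big(\tau\log(2K/q)+\sqrt{T\tau^{2-p}L^p d^{p/2}\log(2K/q)}\big)
\]
for each block. A union bound over the $K$ blocks then holds with probability $1-q/2$.

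\textbf{Step 4 (balancing).} With $\tau=T^{1/p}L(d^{p/2}+1)^{1/p}\log(2K/q)^{-1/p}$, all of these terms become $DT\cdot A\,T^{-(p-1)/p}$ with the constant $A$ as stated. Dividing by $DT$ leaves $A T^{-(p-1)/p}$. Choosing $T$ to balance this against $(\Delta+\delta L)T/(\delta M)$ gives $T\asymp (AM\delta/(\Delta+\delta L))^{p/(2p-1)}$, which yields the first argument of the max. If instead the cap $T=M/2$ is active, the bound becomes $4A/M^{(p-1)/p}$. Finally, the output is the uniformly random block average, and its Goldstein norm is controlled by the bound on $\frac1K\sum_k\|\nabla f(\bar w^k)\|_\delta$ established above.
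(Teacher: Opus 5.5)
Your outer reduction matches the paper: smoothing with $\delta'=\delta/2$, the inclusion $\bar g^k\in\partial_\delta f(\bar w^k)$, the $p$-th moment bound for the two-point estimator, and $\Delta_{f_{\delta/2}}\le\Delta+\delta L$. The paper, however, does not re-derive the online-to-nonconvex part. It invokes Lemma~\ref{thm:o2n_high_prob} of \citet{liu2024high} as a black box, with $h=f_{\delta/2}$ and $\sigma^p+L^p\le L^p(d^{p/2}+1)$ from Lemma~\ref{lemma:zero_order_prop}.

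Your inline re-derivation of that lemma has two genuine gaps. The first is the regret bound in Step~2. With $\eta=D/\tau$ the per-block OGD bound is $2D^2/\eta+\frac{\eta}{2}\sum_t\Norm{\hat g_t}^2$. Using only $\Norm{\hat g_t}\le\tau$ gives $2D\tau+D\tau T/2$. After dividing by $DT$ this leaves a term of order $\tau\propto T^{1/p}$, which grows with $T$. Even your claimed $O(D\tau\sqrt T)$ would leave $\tau/\sqrt T\propto T^{1/p-1/2}$, which does not vanish for any $p\in(1,2]$. It also contradicts Step~4, where every per-block term must be of order $D\tau\log(2K/q)$, i.e.\ $DT\cdot A\,T^{-(p-1)/p}$ up to constants.

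The missing idea is to control $\sum_t\Norm{\hat g_t}^2$ through the second moment of the \emph{clipped} estimator, $\BE_t\Norm{\hat g_t}^2\le 2^{p-1}\tau^{2-p}(\sigma^p+L^p)$ (Lemma~\ref{lemma:clip_grad}). With the chosen $\tau$ this gives $\sum_t\BE_t\Norm{\hat g_t}^2\le 2^{p-1}\tau^2\log(2K/q)$. A Freedman/Bernstein bound for the nonnegative increments $\Norm{\hat g_t}^2\le\tau^2$, union-bounded over the $K$ blocks, then yields $\sum_t\Norm{\hat g_t}^2=O(\tau^2\log(2K/q))$, hence regret $O(D\tau\log(2K/q))$. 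This is exactly why $\tau$ may be as large as $T^{1/p}$: typical clipped gradients are far smaller than $\tau$.

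The second gap is in Step~3. The sum $\sum_t\langle\BE_t\hat g_t-\hat g_t,\Delta_t-u^k\rangle$ is not a martingale difference sequence. The comparator $u^k$ is built from $\bar g^k=\frac1T\sum_t\nabla f_{\delta'}(w^k_t)$, and $w^k_{t+1},\dots,w^k_T$ depend on $\hat g^k_t$. So $u^k$ is not predictable, and Freedman does not apply. The regret bound is unaffected, since it holds deterministically for every comparator in the ball.

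The fix is to split the comparator off. The sum $\sum_t\langle\BE_t\hat g_t-\hat g_t,\Delta_t\rangle$ is a genuine martingale sum, so your scalar Freedman bound applies to it. The remaining part satisfies $\langle\sum_t(\hat g_t-\BE_t\hat g_t),u^k\rangle\le D\Norm{\sum_t(\hat g_t-\BE_t\hat g_t)}$. Bounding this needs a dimension-free concentration inequality for vector-valued martingales (Pinelis type); a union bound over a net of the ball would spoil the $d$-dependence. It gives the same order $D\bigl(\tau\log(2K/q)+\sqrt{T\tau^{2-p}(\sigma^p+L^p)\log(2K/q)}\bigr)$.

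Two smaller points:
\begin{itemize}
\item The H\"older extension of the $p=2$ moment bound must be done conditionally on $\xi$, since $\BE[L(\xi)^2]$ may be infinite.
\item Your Azuma step correctly produces a term of order $L\sqrt{\log(1/q)/M}$. The stated $2B/\sqrt M$ is not dimensionally consistent and lacks this factor $L$, so do not try to match it. The exact constants in $A$ are only recoverable by following \citet{liu2024high}'s proof, which the paper cites rather than reproduces.
\end{itemize}
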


Theorem \ref{thm:high_prob_res} implies that we can achieve a $(\delta, \epsilon)$-Goldstein stationary point with high probability by taking
\begin{align*}
    M = \fO\left(\frac{d^{\frac{p}{2(p-1)}}L^{\frac{p}{p-1}}(\Delta + \delta L)}{\delta \epsilon^{\frac{2p-1}{p-1}}}\log\left(\frac{2K}{q}\right)\right),
\end{align*}
which only include an additional logarithmic factor $\log(2K/q)$ in the setting of $M$ compared with that of the in-expectation result shown in \eqref{number_of_sample_expect}.
Recently, \citet{liu2025online} established a lower bound of $\Omega(\gamma L^{\frac{p}{p-1}} \delta^{-1} \epsilon^{-\frac{2p-1}{p-1}})$ for finding $(\delta, \epsilon)$-Goldstein stationary point for nonconvex nonsmooth problem with heavy-tailed noise.
By comparing the oracle complexity of ZOCOON in~(\ref{number_of_sample_expect}) with the lower bound, we observe that the dependencies on $L$, $\delta$, and $\epsilon$ of the ZOCOON method match the lower bound.

\subsection{Convergence Analysis}

In this subsection, we present the in-expectation convergence analysis of the ZOCOON algorithm in Theorem~\ref{thm:in_expect}.
We perform the analysis by starting with the following lemma that establishes the relationship between the $(\delta, \epsilon)$-Goldstein stationary point of the nonsmooth function $f$ and that of its smoothed surrogate function as follows.  
\begin{lemma}[\citet{kornowski2024algorithm}]
    For any $\delta, \gamma \geq 0$, it holds $\partial_\gamma f_{\delta}(x) \subseteq \partial_{\delta + \gamma} f(x)$, 
    Hence, if $x$ is an $(\gamma, \epsilon)$-Goldstein stationary point of $f_{\delta}$, then it is a $(\delta + \gamma, \epsilon)$-Goldstein stationary point of $f$.
    \label{lemma:nonsmooth_smooth_conversion}
\end{lemma}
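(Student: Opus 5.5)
We prove the inclusion $\partial_\gamma f_\delta(x)\subseteq\partial_{\delta+\gamma}f(x)$ first; the statement about Goldstein stationary points then follows at once. If $x$ is $(\gamma,\epsilon)$-Goldstein stationary for $f_\delta$, pick $g\in\partial_\gamma f_\delta(x)$ with $\Norm{g}\le\epsilon$. The inclusion places $g$ in $\partial_{\delta+\gamma}f(x)$, so $\Norm{\nabla f(x)}_{\delta+\gamma}\le\epsilon$.

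\textbf{Step 1: a single point.} By Lemma~\ref{smooth_lemma}, $f_\delta$ is continuously differentiable. Hence its Clarke subdifferential at any $y$ is the singleton $\{\nabla f_\delta(y)\}$, and by item~3 of that lemma, $\nabla f_\delta(y)\in\partial_\delta f(y)$. So it suffices to show $\partial_\delta f(y)\subseteq\partial_{\delta+\gamma}f(x)$ whenever $\Norm{y-x}\le\gamma$.

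\textbf{Step 2: monotonicity in the radius.} Take $y\in\sB(x,\gamma)$. The triangle inequality gives $\sB(y,\delta)\subseteq\sB(x,\delta+\gamma)$. Therefore
\[
\cup_{z\in\sB(y,\delta)}\partial f(z)\subseteq\cup_{z\in\sB(x,\delta+\gamma)}\partial f(z).
\]
Taking convex hulls preserves this inclusion, so $\partial_\delta f(y)\subseteq\partial_{\delta+\gamma}f(x)$.

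\textbf{Step 3: taking the convex hull over $y$.} By Steps 1 and 2, every $\nabla f_\delta(y)$ with $y\in\sB(x,\gamma)$ lies in $\partial_{\delta+\gamma}f(x)$. This set is convex, being a convex hull. Hence it contains $\mathrm{conv}\bigl(\cup_{y\in\sB(x,\gamma)}\partial f_\delta(y)\bigr)=\partial_\gamma f_\delta(x)$, which is the claimed inclusion.

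\textbf{Main obstacle.} Everything rests on item~3 of Lemma~\ref{smooth_lemma}, that $\nabla f_\delta(y)\in\partial_\delta f(y)$, and we assume it as stated. Its proof writes $\nabla f_\delta(y)=\BE_u[\nabla f(y+\delta u)]$, using Rademacher's theorem for $f$, which is differentiable almost everywhere. It then notes that $\nabla f(z)\in\partial f(z)$ wherever $f$ is differentiable, and concludes because the expectation lies in the closed convex set $\partial_\delta f(y)$. That set is compact, since $\sB(y,\delta)$ is compact and the Clarke subdifferential map is upper semicontinuous with bounded values. The remaining detail is that $\fQ$ must be read as uniform on the unit ball, so that $y+\delta u$ stays within distance $\delta$ of $y$. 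Once this is in place, the argument above is purely set-theoretic.
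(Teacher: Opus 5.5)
Your proof is correct and follows essentially the same route as the argument in \citet{kornowski2024algorithm}, which the paper cites without reproducing. Convexity of $\partial_{\delta+\gamma}f(x)$ reduces the claim to showing $\nabla f_\delta(y)\in\partial_{\delta+\gamma}f(x)$ for $\Norm{y-x}\le\gamma$, and this follows from $\nabla f_\delta(y)=\BE_u[\nabla f(y+\delta u)]$ with $y+\delta u\in\sB(x,\delta+\gamma)$; you simply route it through item~3 of Lemma~\ref{smooth_lemma} and monotonicity of the Goldstein subdifferential in the radius. The only thing to add is that Step~1 needs $\delta>0$ for $f_\delta$ to be differentiable, whereas for $\delta=0$ one has $f_0=f$ and the inclusion is trivial.
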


Furthermore, the two-point zeroth-order gradient estimator satisfies the following properties.
\begin{lemma}[\citet{kornilov2023accelerated}]
    Under Assumption \ref{asm:Lip}, let 
    \begin{align*}
        g = \frac{d}{2 \delta} (F(x + \delta w; \xi) - F(x - \delta w;\xi))w,
    \end{align*}
    where $w$ is uniformly sampled from the unit sphere and $\delta>0$. Then it holds that
    \begin{align*}
        \BE[g] = \nabla f_{\delta}(x),
    \end{align*}
    and 
    \begin{align*}
        \BE[\Norm{g - \nabla f_{\delta}(x)}^p] \leq \left(\frac{\sqrt{d} L}{2^{\frac{1}{4}}}\right)^{p}.
    \end{align*}
    \label{lemma:zero_order_prop}
\end{lemma}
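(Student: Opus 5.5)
The plan is to prove the two claims separately. In both, we condition on $\xi$ first and then average over $\xi$, using Assumption~\ref{asm:Lip} only at the end. Throughout we read the smoothing as $f_\delta(x)=\BE_{u}[f(x+\delta u)]$ with $u$ uniform on the unit ball, so that $\nabla f_\delta$ lives at radius $\delta$.

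\emph{Unbiasedness.} Since $|F(y;\xi)-F(x;\xi)|\le L(\xi)\Norm{y-x}$ and $\BE[L(\xi)]\le L<\infty$, all the expectations below are finite and Fubini applies. This gives $\BE_\xi[F(x\pm\delta w;\xi)]=f(x\pm\delta w)$ for every fixed $w$. Hence
\[
\BE[g]=\frac{d}{2\delta}\BE_w\big[(f(x+\delta w)-f(x-\delta w))w\big]=\frac{d}{\delta}\BE_w\big[f(x+\delta w)w\big],
\]
where the second equality uses the symmetry $w\mapsto -w$ of the uniform law on $\BS^{d-1}$. We then invoke the standard divergence-theorem identity (Flaxman et al.)
\[
\nabla\!\int_{\sB(0,1)} f(x+\delta u)\,{\rm d}u=\frac{1}{\delta}\int_{\BS^{d-1}} f(x+\delta w)\,w\,{\rm d}\sigma(w).
\]
This identity holds for Lipschitz $f$ by approximating $f$ with smooth functions, or directly since $f$ is a.e.\ differentiable. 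Dividing by $\mathrm{vol}(\sB(0,1))=\sigma(\BS^{d-1})/d$ gives $\nabla f_\delta(x)=\frac{d}{\delta}\BE_w[f(x+\delta w)w]$, which proves the first claim.

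\emph{Moment bound.} Fix $\xi$ and set $h_\xi(w)=F(x+\delta w;\xi)-F(x-\delta w;\xi)$. This function of $w$ is $2\delta L(\xi)$-Lipschitz on the sphere. It is also odd, so $\BE_w[h_\xi(w)]=0$. Concentration of Lipschitz functions on $\BS^{d-1}$ then gives a dimension-free-times-$1/d$ bound on its moments. Concretely, using the fourth-moment form of the concentration inequality (as in Shamir's two-point analysis), we aim for
\[
\BE_w[h_\xi(w)^2]\le\sqrt{\BE_w[h_\xi(w)^4]}\le \frac{c\,\delta^2L(\xi)^2}{d}.
\]
Since $\Norm{g}=\frac{d}{2\delta}|h_\xi(w)|$, this yields $\BE_w\Norm{g}^2\le \frac{c\,d\,L(\xi)^2}{4}$. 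Because $p\le 2$, Jensen's inequality in $w$ then gives $\BE_w\Norm{g}^p\le(\sqrt{c}/2)^p d^{p/2}L(\xi)^p$. Taking $\BE_\xi$ and applying $\BE[L(\xi)^p]\le L^p$ gives $\BE\Norm{g}^p\le(\sqrt{c}/2)^p(\sqrt d L)^p$. Note that we never take a second moment in $\xi$, which matters because the variance of $L(\xi)$ may be infinite.

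\emph{Centering.} It remains to pass from $\BE\Norm{g}^p$ to $\BE\Norm{g-\nabla f_\delta(x)}^p$. By the first part, $\nabla f_\delta(x)=\BE g$. We could use $\Norm{a-b}^p\le 2^{p-1}(\Norm{a}^p+\Norm{b}^p)$ together with $\Norm{\BE g}^p\le\BE\Norm{g}^p$, but this costs a factor $2^p$. To recover the stated constant $2^{-p/4}$ instead, I would center conditionally: write $g-\nabla f_\delta=(g-\BE_w g)+(\BE_w g-\nabla f_\delta)$. Alternatively, I would apply the sphere concentration directly to the centered quantity, exploiting the oddness of $h_\xi$, and track the sharp constant from the $4$th-moment concentration bound with $c=\sqrt2$ type values.

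\emph{Main obstacle.} I expect the main difficulty to be this constant bookkeeping, namely obtaining exactly $(\sqrt d L/2^{1/4})^p$. The order $\fO((\sqrt d L)^p)$ follows robustly from the argument above.
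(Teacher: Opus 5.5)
The paper gives no proof of this lemma; it only cites \citet{kornilov2023accelerated}. Your argument is the standard one and is sound up to an absolute constant. It runs as follows: unbiasedness via the symmetry of $w$ and the ball/sphere divergence identity, then conditioning on $\xi$, concentration for the odd, $2\delta L(\xi)$-Lipschitz function $h_\xi$ on $\BS^{d-1}$, Jensen in $w$ only, and finally $\BE[L(\xi)^p]\le L^p$. Your reading of $f_\delta$, with $u$ uniform on the \emph{unit} ball, is also the one under which $\BE[g]=\nabla f_\delta(x)$ holds; the paper's definition scales by $\delta$ twice.

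The step you flag as the main obstacle cannot be completed, because the constant $2^{-1/4}$ in the statement is false. Take $F(x;\xi)=\langle a,x\rangle$ for all $\xi$. Then $L(\xi)\equiv L=\Norm{a}$, $\nabla f_\delta(x)=a$, and $g=d\langle a,w\rangle w$. Since $\BE[\langle a,w\rangle^2]=\Norm{a}^2/d$,
\begin{align*}
\BE\Norm{g}^2=dL^2, \qquad \BE\Norm{g-\nabla f_\delta(x)}^2=(d-1)L^2 .
\end{align*}
For $p=2$ the lemma claims the bound $dL^2/\sqrt{2}$, which fails for every $d\ge 4$. In your notation, $\BE_w[h_\xi^2]=4\delta^2L^2/d$ forces $c\ge 4$, whereas $(\sqrt{c}/2)^p=2^{-p/4}$ requires $c=2\sqrt{2}$.

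The failure is not confined to $p=2$. Since $\sqrt{d}\,\langle a,w\rangle/\Norm{a}$ converges in law to $Z\sim N(0,1)$, for $p=3/2$ and large $d$ the ratio $\BE\Norm{g-a}^{p}/(\sqrt{d}L)^{p}$ tends to $\BE|Z|^{3/2}\approx 0.86$, which exceeds $2^{-3/8}\approx 0.77$. So neither conditional centering nor sharper bookkeeping can recover the stated constant.

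Instead, prove the bound with an explicit absolute constant. That is all the paper uses downstream: the proofs of Theorems~\ref{thm:in_expect} and~\ref{thm:high_prob_res} only invoke $\sigma=\fO(\sqrt{d}L)$, so their displayed constants change only by absolute factors. Your route gives this, and it simplifies.
\begin{itemize}
\item Because $\Norm{w}=1$, you have $\Norm{g}=\frac{d}{2\delta}|h_\xi(w)|$, so no fourth moment is needed. That device matters only for $\ell_q$-norm versions in which $\Norm{w}_q$ varies.
\item Since $\BE_w[h_\xi]=0$ by oddness, the Poincar\'e inequality on $\BS^{d-1}$ (spectral gap $d-1$) gives $\BE_w[h_\xi^2]\le (2\delta L(\xi))^2/(d-1)$.
\item Hence $\BE_w\Norm{g}^2\le \frac{d^2}{d-1}L(\xi)^2\le 2dL(\xi)^2$ for $d\ge 2$.
\item Jensen in $w$ and averaging over $\xi$ give $\BE\Norm{g}^p\le (\sqrt{2d}\,L)^p$.
\item Using $\Norm{\nabla f_\delta(x)}\le L$ and convexity of $t\mapsto t^p$, you get $\BE\Norm{g-\nabla f_\delta(x)}^p\le 2^{p-1}\left((2d)^{p/2}+1\right)L^p$.
\end{itemize}
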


We now introduce the following complexity lemma for the clipped online-to-nonconvex algorithm with a general gradient estimator (Algorithm~\ref{alg:o2n}).

\begin{lemma}
    Let $\delta', \epsilon \in (0,1)$ and $h: \BR^d \to \BR$ be~an $L$-Lipschitz function such that $h(x_0)-\inf_{x \in \BR^d} h(x)\leq \Delta_h$ for a given $x_0\in\BR^d$.
    We run Algorithm \ref{alg:o2n} to solve the problem $\min_{x\in\BR^d}h(x)$ with the initial point $x_0$
    and the unbiased stochastic gradient estimator  
    ${\rm GradientEstimator}(x, \delta', \xi)$ such that    
    $\BE[{\rm GradientEstimator}(x, \delta', \xi) ] = \nabla h(x)$ and $\BE[\Norm{{\rm GradientEstimator}(x, \delta', \xi) - \nabla h(x)}^p] \leq \sigma^p$,
then we take the following parameters as
\begin{align*}
    &  T = \min\left(\ceil{\left(\frac{\delta M (\sigma^p + L^p)^\frac{1}{p}}{\Delta_h}\right)^{\frac{p}{2p-1}}}, \frac{M}{2}\right),\\
    &\tau = T^{\frac{1}{p}}(\sigma^p + L^p)^{\frac{1}{p}},~K = \floor{\frac{M}{T}},~D=\frac{\delta'}{T},
\end{align*}
ensures that
\begin{align}\label{inexpect_converg_main}
\begin{split}
   & \BE[\Norm{\nabla h(w_{\rm out})}_{\delta'}]\\
   \leq &\max\left(\frac{9 \Delta_h^{\frac{p-1}{2p-1}} (\sigma^p + L^p)^{\frac{1}{2p - 1}}}{(\delta' M)^{\frac{p-1}{2p-1}}}, \frac{16(\sigma^p + L^p)^\frac{1}{p}}{M^{\frac{p-1}{p}}}\right) + \frac{\Delta_h}{\delta' M}.
\end{split}
\end{align}
\label{thm:o2n_expect}
\end{lemma}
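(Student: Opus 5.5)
The plan is to follow the online-to-nonconvex analysis of \citet{cutkosky2023optimal}, as adapted to clipped gradients by \citet{liu2024high}. The argument has three parts: the per-step telescoping identity, a regret bound for projected online gradient descent against an arbitrary comparator on each block, and a bias/variance analysis of the clipping. Throughout, $g_n$ is the unclipped estimator at $w_n$ and $\hat g_n={\rm clip}(g_n,\tau)$. Since $s_n\sim{\rm Unif}[0,1]$, Assumption~\ref{asm:well} (applied to $h$) gives
\begin{align*}
\BE[h(x_n)-h(x_{n-1})]=\BE\langle \nabla h(w_n),\Delta_n\rangle .
\end{align*}
Write $\nabla h(w_n)=\hat g_n+(\nabla h(w_n)-\hat g_n)$. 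Summing over block $k$ and inserting a comparator $u^k=-D\,\sum_{t}\nabla h(w^k_t)/\Norm{\sum_t \nabla h(w^k_t)}$, we get for each block
\begin{align*}
\sum_{t}\langle \nabla h(w_t^k),\Delta_t^k\rangle = \sum_t\langle \hat g_t^k,\Delta_t^k-u^k\rangle+\sum_t\langle \nabla h(w_t^k)-\hat g_t^k,\Delta_t^k-u^k\rangle - DT\Big\|\tfrac1T\textstyle\sum_t\nabla h(w_t^k)\Big\|.
\end{align*}
Summing over $k$, the left-hand side is at most $\Delta_h$ in expectation. Rearranging, the average of $\Norm{\frac1T\sum_t\nabla h(w_t^k)}$ is bounded by $\frac{\Delta_h}{DM}$ plus the regret and error terms, each divided by $DT$.

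For the regret term, projected OGD on $\sB(0,D)$ with $\eta=D/\tau$ and $\Norm{\hat g_n}\le\tau$ has regret $\le \frac{D^2}{\eta}+\eta\sum\Norm{\hat g_n}^2/2$ against a fixed comparator, which gives $O(D\tau\sqrt T)$ after choosing $\eta$ well. Here, with $\eta=D/\tau$, the bound is $O(D\tau T)$, which is too crude. I will therefore use the variant that exploits the restart: the comparator changes across blocks, so the cost is $\frac{D^2}{\eta}$ per block, plus $\eta\sum_t\BE\Norm{\hat g_t}^2$. Now $\BE\Norm{\hat g}^2\le \tau^{2-p}\BE\Norm{g}^p\le \tau^{2-p}2^{p}(\sigma^p+L^p)$. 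With $\tau=T^{1/p}(\sigma^p+L^p)^{1/p}$, the per-block regret divided by $DT$ is $O(\tau/T)+O(T\tau^{1-p}(\sigma^p+L^p)/T)=O((\sigma^p+L^p)^{1/p}T^{\frac1p-1})$.

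For the error term, split $\nabla h(w_n)-\hat g_n=(\nabla h(w_n)-\BE_n\hat g_n)+(\BE_n\hat g_n-\hat g_n)$. The martingale part: its inner product with the $\mathcal F_{n-1}$-measurable $\Delta_n$ has zero mean. Against the random $u^k$, Cauchy--Schwarz gives $D\,\BE\Norm{\sum_t(\BE\hat g_t-\hat g_t)}\le D\sqrt{T}\,(\BE\Norm{\hat g}^2)^{1/2}$, and the more careful heavy-tail bound gives $D\,T^{1/p}\cdot O((\sigma^p+L^p)^{1/p})$ through a $p$-th moment martingale inequality (von Bahr--Esseen type, valid in Hilbert space with constant $2$). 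The bias part is standard for clipping: $\Norm{\BE\hat g-\nabla h}\le \BE[\Norm{g}\mathbf 1\{\Norm g>\tau\}]\le \BE\Norm{g}^p/\tau^{p-1}$. This contributes $O((\sigma^p+L^p)\tau^{1-p})=O((\sigma^p+L^p)^{1/p}T^{\frac1p-1})$ per unit of $2D$, using $\Norm{\Delta-u}\le 2D$. Altogether,
\begin{align*}
\BE\frac1K\sum_k\Big\|\tfrac1T\textstyle\sum_t\nabla h(w_t^k)\Big\|\lesssim \frac{\Delta_h}{DM}+(\sigma^p+L^p)^{1/p}T^{\frac1p-1}.
\end{align*}

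Finally, $\Norm{w_t^k-\bar w^k}\le TD=\delta'$, so $\frac1T\sum_t\nabla h(w_t^k)\in\partial_{\delta'}h(\bar w^k)$ and the left-hand side bounds $\BE\Norm{\nabla h(w_{\rm out})}_{\delta'}$. Substituting $D=\delta'/T$ gives $\frac{T\Delta_h}{\delta' M}+C(\sigma^p+L^p)^{1/p}T^{-\frac{p-1}{p}}$. If $T$ equals the ceiling, balancing yields the first argument of the max, with the ceiling overshoot producing the additive $\Delta_h/(\delta'M)$. If instead $T=M/2$, the second term is $O((\sigma^p+L^p)^{1/p}M^{-\frac{p-1}{p}})$ and the first is dominated by it. Also $K\ge M/(2T)$, which accounts for the constant factors.

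I expect the main obstacle to be the regret/variance bookkeeping that yields $T^{\frac1p-1}$ rather than $T^{-1/2}$. Controlling the interaction of the random comparator $u^k$ with the clipped noise requires the $p$-th moment martingale inequality, and the constants $9$ and $16$ have to be tracked through that step.
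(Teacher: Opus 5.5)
Your proposal is correct and follows the paper's skeleton. Both arguments use the telescoping identity through the uniformly random $s_n$, a per-block OGD regret bound against the block comparator $u^k$ with the clipped second moment $\BE\Norm{\hat g_n}^2\le 2^{p-1}\tau^{2-p}(\sigma^p+L^p)$, the clipping bias $\lesssim(\sigma^p+L^p)/\tau^{p-1}$ paired with $\Norm{\Delta_n-u^k}\le 2D$, the Goldstein inclusion from $\Norm{w^k_t-\bar w^k}\le TD=\delta'$, and balancing $T$.

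The one real difference is the clipped noise tested against the comparator, and there your version is the sound one. The paper collects this into $A_5=\sum_n\langle \BE_{z_n}[\hat g_n]-\hat g_n,\Delta_n-u_n\rangle$ and asserts $\BE[A_5]=0$. That is valid only for the $\Delta_n$ part. The comparator $u^k$ is built from $\nabla h(w^k_t)$ over the whole block, so it depends on $\hat g_n$ through later iterates. Your Cauchy--Schwarz treatment, bounding this term by $D\,\BE\Norm{\sum_t(\BE_t\hat g_t-\hat g_t)}$ as in \citet{cutkosky2023optimal}, is what is needed, and it costs only a constant. The paper's $8\tau/T$ becomes at most $(7+\sqrt2)\tau/T$, so the leading constant $9$ must be raised slightly.

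You also do not need a von Bahr--Esseen inequality, so the obstacle you anticipate is not real. The increments are orthogonal and the clipped second-moment bound applies, so using $\tau^p=T(\sigma^p+L^p)$,
\begin{align*}
\BE\Norm{\textstyle\sum_{t=1}^T(\BE_t\hat g_t-\hat g_t)}\le\Big(T\,2^{p-1}\tau^{2-p}(\sigma^p+L^p)\Big)^{1/2}=2^{\frac{p-1}{2}}\tau .
\end{align*}
The plain $L^2$ route therefore already yields the $T^{\frac1p-1}$ rate.

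There are two harmless slips. First, the quantity bounded by $\Delta_h$ is $\BE[h(x_0)-h(x_{KT})]$, the negative of your left-hand side; the rearrangement you describe is nonetheless right. Second, the algorithm never resets $\Delta$ at block boundaries. The per-block OGD regret bound still holds, but with $\Norm{u^k-\Delta^k_1}^2/(2\eta)\le 2D^2/\eta$ rather than $D^2/\eta$.
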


We then present the proof of Theorem \ref{thm:in_expect} as follows.
\begin{proof}
According to Lemma~\ref{lemma:zero_order_prop}, Algorithm \ref{alg:grad} with $\xi\sim\fP$ will output the gradient estimate
\begin{align*}
g = \frac{d}{2 \delta'}(F(x + \delta' w; \xi) - F(x - \delta' w; \xi))w    
\end{align*}
that satisfies $\BE[g] = \nabla f_{\delta'}(x)$ and $\BE[\Norm{g-\nabla f_{\delta'}(x)}^p]= \fO(d^{\frac{p}{2}}L^p)$.

Then, running Algorithm~\ref{alg:o2n} by taking  Algorithm~\ref{alg:grad} as the subroutine  
${\rm GradientEstimator}(x, \delta', \xi)$, 
which satisfies the conditions
$\BE[{\rm GradientEstimator}(x, \delta', \xi) ] = \nabla h(x)$ and 
\begin{align*}
     \BE[\Norm{{\rm GradientEstimator}(x, \delta', \xi) - \nabla h(x)}^p] \leq \sigma^p,
\end{align*}
where $h = f_{\delta'}$ and $\sigma = \fO(\sqrt{d} L)$.
In the view of Lemma~\ref{thm:o2n_expect} by taking $\delta'=\delta/2$, we have
\begin{align}
\begin{split}
   & \BE[\Norm{\nabla f_{\delta/2}(w_{\rm out})}_{\delta/2}]\\
   \leq &\max\left(\frac{9 (\Delta_{f_{\delta/2}})^{\frac{p-1}{2p-1}} (d^{\frac{p}{2}} L^p \!+\! L^p)^{\frac{1}{2p - 1}}}{(\delta M/2)^{\frac{p-1}{2p-1}}}, \frac{16(d^{\frac{p}{2}} L^p \!+\! L^p)^\frac{1}{p}}{M^{\frac{p-1}{p}}}\right)\! \\ 
   & + \frac{2\Delta_{f_{\delta/2}}}{\delta M} 
   \coloneqq  \epsilon',
\end{split}
\end{align}
where $\Delta_{f_{\delta/2}}=f_{\delta/2}(x_0) - \inf_{x \in \BR^d} f_{\delta/ 2}(x)$.
Hence, the output $w_{\rm out}$ generated from Algorithm \ref{alg:o2n} is a $(\delta / 2, \epsilon')$ Goldstein stationary point of $f_{\delta/ 2}$.
Consequently, Lemma \ref{lemma:nonsmooth_smooth_conversion} implies the output $w_{\rm out}$ is a $(\delta, \epsilon')$-Goldstein stationary point of $f$, i.e.,
\begin{align}\label{inexpect_converg_main2}
   & \BE[\Norm{\nabla f(w_{\rm out})}_{\delta}]
   \leq  \epsilon'.
\end{align}
Comparing equations (\ref{inexpect_converg_main2}) with the desired result of Theorem~\ref{thm:o2n_expect}, we only need to show that $\Delta_{f_{\delta/2}}\leq\delta L + \Delta$.
This can be verified as follows
\begin{align*}
\Delta_{f_{\delta/2}} 
= &f_{\delta/2}(x_0) -  f_{\delta/2}^*\\
= & (f_{\delta/2}(x_0) - f(x_0)) + (f(x_0)- f^*) + (f^* -  f_{\delta/2}^*) \\
\leq &  \frac{\delta L}{2} + \Delta + \frac{\delta L}{2} = \delta L + \Delta,
\end{align*}
where the inequality is based on result $\Norm{f - f_{\delta/2}}_\infty \leq {\delta L}/{2}$ from Lemma~\ref{smooth_lemma} and the setting of $\Delta$.
\end{proof}

Following the above proof for the in-expectation convergence guarantee, we can also establish the high-probability convergence guarantee show in Theorem \ref{thm:high_prob_res}.
Specifically, we first derive a high-probability result for the clipped online-to-nonconvex framework with a general gradient estimator, then apply Lemmas~\ref{lemma:nonsmooth_smooth_conversion} and~\ref{lemma:zero_order_prop} to show our Algorithm \ref{alg:o2n} by taking Algorithm \ref{alg:grad} as the subroutine can achieve an desired approximate Goldstein stationary point.
We defer the detailed proof to Appendix \ref{appendix:high_prob_res}.

\begin{figure*}[!ht]
\centering
\begin{tabular}{ccc}
\includegraphics[scale=0.24]{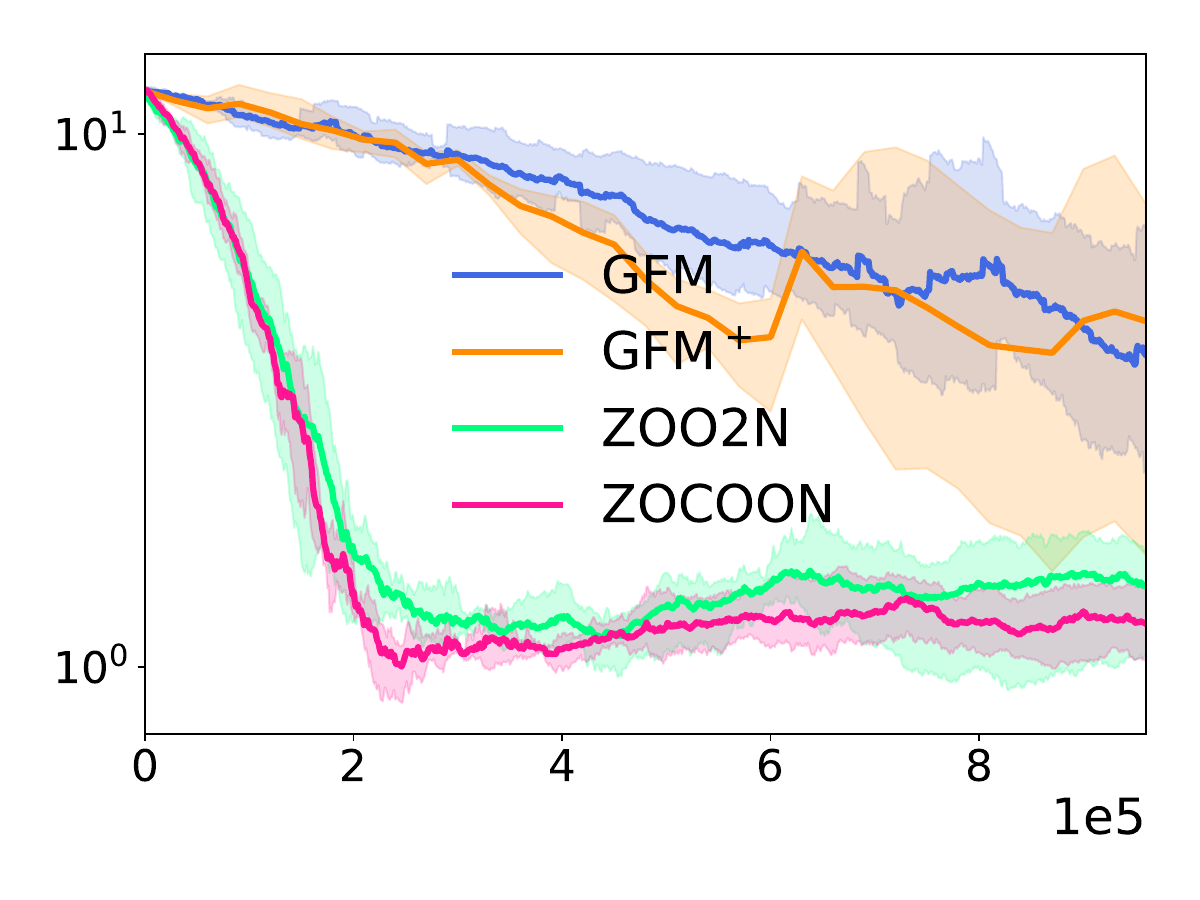} & \includegraphics[scale=0.24]{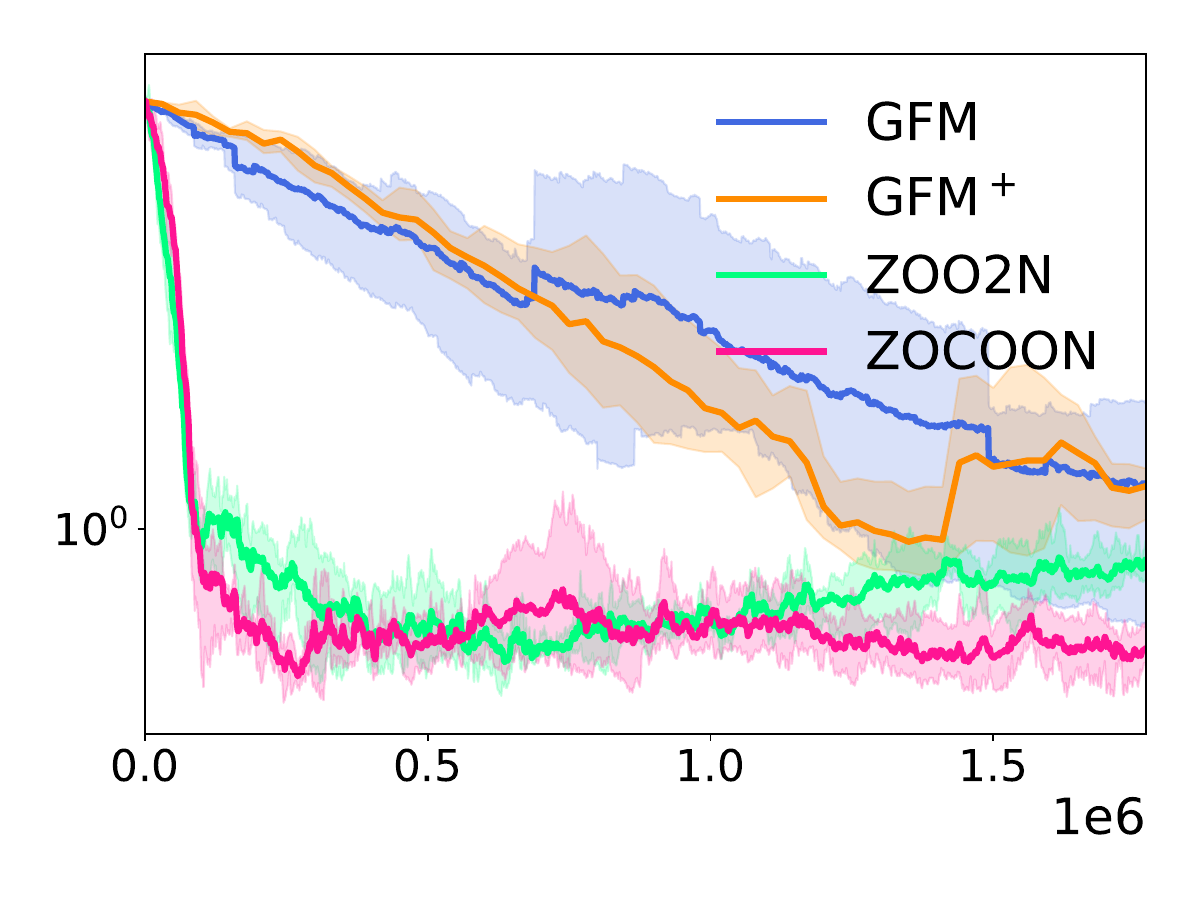} & \includegraphics[scale=0.24]{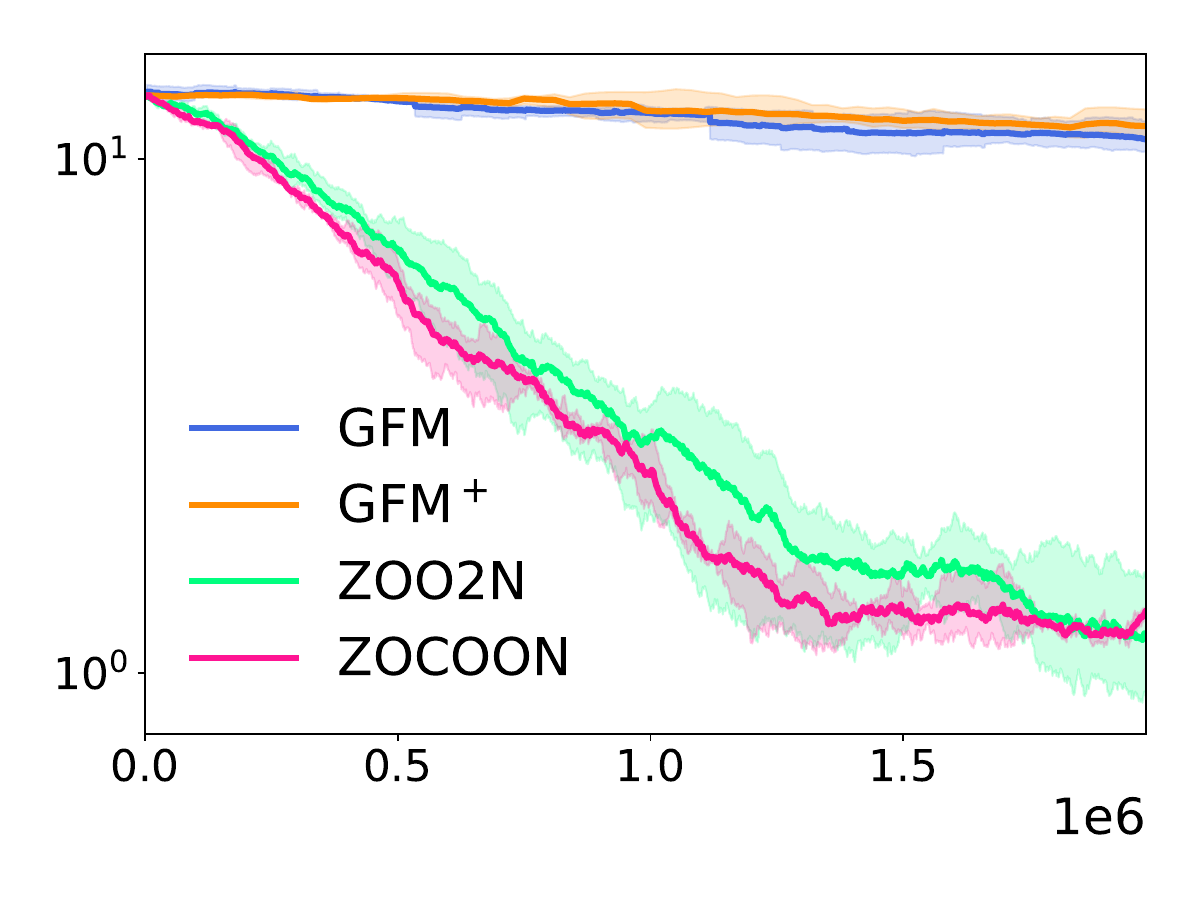} \\
(a)  a9a     &  (b) covtype & (c) w8a  
\end{tabular}
\caption{For nonconvex penalized SVM, we present the results for zeroth-order function query complexity vs. loss on several real-world datasets. The result for each method is averaged over 10 independent runs.}
\label{fig:reg_svm}
\end{figure*}
\vspace{0.3cm}

\subsection{Nonconvex Smooth Guarantee}
In this subsection, we consider extending our result for nonsmooth optimization to smooth optimization.
The following proposition shows that if $x$ is a $(\delta, \epsilon)$-Goldstein stationary point of the smooth function $f$, then the norm of $\nabla f(x)$ can be bounded by 
$\delta$ and $\epsilon$.

\begin{proposition}[\citet{cutkosky2023optimal}]
    Suppose that $f$ is $H$-smooth and $x$ is a $(\delta, \epsilon)$-Goldstein stationary point of the function $f$.
    Then $\Norm{\nabla f(x)} \leq \epsilon + H \delta$.
    \label{prop:station_relation}
\end{proposition}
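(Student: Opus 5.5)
The plan is to work directly from the definition of the Goldstein $\delta$-subdifferential, combined with the fact that $\nabla f$ is $H$-Lipschitz. Since $f$ is $H$-smooth, it is in particular continuously differentiable, so its Clarke subdifferential is a singleton: $\partial f(y) = \{\nabla f(y)\}$ for every $y$. This gives
\begin{align*}
\partial_\delta f(x) = {\rm conv}\{\nabla f(y) : y \in \sB(x,\delta)\}.
\end{align*}

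Because $x$ is a $(\delta,\epsilon)$-Goldstein stationary point, there is some $g \in \partial_\delta f(x)$ with $\Norm{g} \le \epsilon$. By Carathéodory's theorem, or simply by the definition of the convex hull as the set of finite convex combinations, we can write
\begin{align*}
g = \sum_{i=1}^m \lambda_i \nabla f(y_i),
\end{align*}
where $y_i \in \sB(x,\delta)$, $\lambda_i \ge 0$ and $\sum_i \lambda_i = 1$.

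Next we compare $\nabla f(x)$ with this combination. Using $\sum_i\lambda_i=1$, the triangle inequality, and the $H$-Lipschitz continuity of $\nabla f$, we get
\begin{align*}
\Norm{\nabla f(x) - g} = \Norm{\sum_{i=1}^m \lambda_i (\nabla f(x) - \nabla f(y_i))} \le \sum_{i=1}^m \lambda_i H \Norm{x - y_i} \le H\delta.
\end{align*}
One more application of the triangle inequality then gives $\Norm{\nabla f(x)} \le \Norm{g} + \Norm{\nabla f(x)-g} \le \epsilon + H\delta$.

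There is no real obstacle here. The only point to justify is that the Clarke subdifferential of a $C^1$ function reduces to its gradient. This is standard (Clarke, 1990): it follows from the definition of $Df(x;v)$, since for $C^1$ functions the generalized directional derivative equals $\langle \nabla f(x), v\rangle$. We also need the minimum in $\Norm{\nabla f(x)}_\delta$ to be attained. Even if it were not, we could take $g$ with $\Norm{g}\le \epsilon+\eta$ for an arbitrary $\eta>0$ and let $\eta\to 0$, so the conclusion is unaffected.
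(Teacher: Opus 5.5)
Your proof is correct. The paper gives no proof of its own and simply cites \citet{cutkosky2023optimal}, and your argument is the standard one: write an element of $\partial_\delta f(x)$ of norm at most $\epsilon$ as a convex combination of gradients $\nabla f(y_i)$ with $\Norm{y_i-x}\le\delta$, bound $\Norm{\nabla f(x)-\sum_i\lambda_i\nabla f(y_i)}\le H\delta$ using that $\nabla f$ is $H$-Lipschitz, and apply the triangle inequality; your side remarks (that $\partial f(y)=\{\nabla f(y)\}$ for $C^1$ functions, and that the minimum is actually attained because the convex hull of the compact set $\{\nabla f(y): y\in\sB(x,\delta)\}$ is compact) correctly settle the only technical points.
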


Recall that Theorem~\ref{thm:in_expect} implies that we can  obtain a $(\delta, \epsilon)$-Goldstein stationary point with $\fO(d^{\frac{p}{2(p-1)}} \delta^{-1} \epsilon^{-\frac{2p-1}{p-1}})$ stochastic zeroth-order oracle calls. 
Therefore, applying Proposition~\ref{prop:station_relation}, with $\delta = \epsilon / H$ yields an $\epsilon$-stationary point of the smooth function within $\fO(d^{\frac{p}{2(p-1)}}  \epsilon^{-\frac{3p-2}{p-1}})$ oracle calls.
The above dependence on $\epsilon$ matches that in the oracle complexity of optimal stochastic first-order methods for nonconvex smooth optimization \citep{hubler2024gradient, liunonconvex2025,zhang2020adaptive}.

\section{Numerical Experiments}\label{sec:experiments}

In this section, we conduct numerical experiments to validate the effectiveness of our approach.
We consider a nonconvex penalized SVM with a capped-$\ell_1$ regularizer \citep{zhang2010nearly}.
The objective is to train a binary classifier $x \in \BR^d$ on a dataset ${(a_i, b_i)}_{i=1}^n$, where $a_i \in \BR^d$ represents the feature vector of the $i$-th sample and $b_i \in \{1, -1\}$ denotes its corresponding label.
To model heavy-tailed noise, we add an additional term $\langle \xi, x \rangle$, where $\xi$ is drawn from a Pareto distribution with the shape parameter chosen as $1.5$.
Formally, the problem can be expressed as the following nonconvex nonsmooth optimization formulation:
\begin{align*}
    \min_{x \in \BR^d} f(x) \coloneqq \frac{1}{n} \sum_{i=1}^n l(b_i a_i^{\top} x) + r(x) + \langle \xi, x \rangle,
\end{align*}
where $l(x) = \max\{1-x, 0\}$, $r(x) = \lambda \sum_{j=1}^d \min\{|x_j|, \alpha\}$ and $\lambda, \alpha > 0$ are hyperparameters.
We take $\lambda = 10^{-5}/n$ and $\alpha = 2$ in our experiments.

We compare the proposed ZOCOON method with baselines including GFM~\citep{lin2022gradient}, GFM\textsuperscript{+} \citep{chen2023faster} and ZOO2N \citep{kornowski2024algorithm} on LIBSVM datasets ``a9a'',  ``covtype'', and ``w8a'' \citep{chang2007LIBSVM}.
We set the smoothing parameter to $\delta' = 0.001$ and tune the stepsize over the set $\{0.1, 0.03, \dots, 3\times 10^{-7}, 1\times 10^{-7}\}$ for all methods.
For GFM\textsuperscript{+}, all other hyperparameters are set to their default values.
For both ZOO2N and ZOCOON, we select the parameter $D$ from $\{1\times 10^{-2}, 3\times 10^{-3}, \dots, 1\times 10^{-5}\}$. In addition, we set the gradient clipping parameter to $\tau = 1\times 10^{-2}$ for ZOCOON.
We run each algorithm with 10 different random seeds on each dataset and report the results in Figure~\ref{fig:reg_svm}. The vertical axis shows the mean loss averaged over the runs, while the horizontal axis denotes the zeroth-order function query complexity.
We observe that ZOCOON and ZOO2N converge significantly faster than the other baseline methods. Moreover, ZOCOON exhibits more stable performance than ZOO2N on most datasets, which is consistent with our theoretical results under the heavy-tailed noise setting.

\section{Conclusion}\label{sec:conclusion}
In this work, we propose ZOCOON, a stochastic zeroth-order method for solving stochastic nonconvex nonsmooth optimization problems under heavy-tailed noise. We establish both in-expectation and high-probability convergence guarantees for ZOCOON. We further show that the stochastic zeroth-order complexity of our method matches that of the best-known zeroth-order methods under the bounded-variance assumption. Moreover, by establishing a lower bound of oracle complexity of stochastic first-order methods for nonconvex nonsmooth optimization, the dependence of ZOCOON on the accuracy parameters $\delta$ and $\epsilon$ matches the lower bound.

For future work, it would be interesting to investigate whether the dependence of our algorithm on the dimension $d$ is optimal. Another promising direction is to extend our framework to constrained nonconvex nonsmooth optimization with heavy-tailed noise.

\bibliographystyle{plainnat}
\bibliography{reference}

\newpage
\appendix The appendix is organized as follows. 
Section \ref{appendix:establish} introduces several key lemmas that are essential for the convergence analysis of the ZOCOON method.
Section \ref{appendix:proof_zocoon} presents the proof of the in-expectation and high-probability convergence guarantee of the ZOCOON method for solving the nonconvex nonsmooth optimization problem.

\section{Supporting Lemmas}\label{appendix:establish}
In this section, we revisit some key lemmas that are essential for the convergence analysis of the proposed algorithms.

\begin{lemma}[\citet{kornilov2023accelerated}]
    Assumption \ref{asm:Lip} implies that $f(x)$ is $L$-Lipschitz on $\BR^d$.
\end{lemma}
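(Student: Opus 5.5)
We need to show that Assumption~\ref{asm:Lip} implies $f$ is $L$-Lipschitz on $\BR^d$. The plan is to bound the increment of $f$ by the expected increment of the components $F(\cdot;\xi)$, and then pass from the $p$-th moment of $L(\xi)$ to its first moment via Jensen's inequality.

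Fix $x,y\in\BR^d$. By the definition $f(x)=\BE_{\xi}[F(x;\xi)]$ and linearity of expectation, we have $f(x)-f(y)=\BE_\xi[F(x;\xi)-F(y;\xi)]$. Moving the absolute value inside the expectation and applying Assumption~\ref{asm:Lip} pointwise in $\xi$ gives
\begin{align*}
|f(x)-f(y)| \leq \BE_\xi\big[|F(x;\xi)-F(y;\xi)|\big] \leq \BE_\xi[L(\xi)]\,\Norm{x-y}.
\end{align*}

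It remains to show $\BE_\xi[L(\xi)]\leq L$. Since $p\in(1,2]$, the map $t\mapsto t^p$ is convex on $[0,\infty)$, and $L(\xi)\geq 0$. Jensen's inequality therefore gives $(\BE[L(\xi)])^p\leq \BE[L(\xi)^p]\leq L^p$. Taking $p$-th roots, which is monotone on nonnegative reals, yields $\BE[L(\xi)]\leq L$. Substituting this into the display above gives $|f(x)-f(y)|\leq L\Norm{x-y}$, as claimed.

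The only point needing care is that the expectations are well defined. The bound $\BE[L(\xi)]\le L<\infty$ makes $F(x;\xi)-F(y;\xi)$ integrable, since it is dominated by $L(\xi)\Norm{x-y}$. Finiteness of $f(x)$ at one point, which is implicit in the problem formulation~(\ref{obj}), then propagates to every point. This integrability check is the main, though mild, obstacle; the rest is routine.
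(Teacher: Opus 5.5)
Your proof is correct and follows the paper's route: the paper simply notes (citing \citet{kornilov2023accelerated}) that Lipschitzness of $f$ follows from Assumption~\ref{asm:Lip} by Jensen's inequality, which is exactly your bound $|f(x)-f(y)|\le \BE[L(\xi)]\Norm{x-y}$ combined with $(\BE[L(\xi)])^p\le\BE[L(\xi)^p]\le L^p$. Your integrability remark is a harmless detail that the paper leaves implicit.
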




In addition, we recall the following results for the clipping stochastic gradient.

\begin{lemma}[\citet{liu2024high}]
    Suppose $g$ is a heavy-tailed random vector, and it satisfies $\Norm{\BE[g]} \leq L$ and $\BE[\Norm{g - \BE[g]}^p] \leq \sigma^p$ for some $p \in (1, 2]$.
    Let $\hat{g}$ be a truncated gradient with a positive clipping parameter $\tau$:
    \begin{align*}
     \hat{g} =
    \begin{cases}
        \min\left\{1, {\tau}/{\Norm{g}}\right\}g, & g\neq0, \\
        0, & g=0,
    \end{cases}
\end{align*}
then we have
\begin{align*}
  &   \Norm{ \BE[\hat{g}] - \BE[g]} \leq \frac{2^{p-1} (\sigma^p + L^p)}{\tau^{p-1}}, \\
   & \BE [\Norm{\hat{g}}^2] \leq 2^{p-1} \tau^{2 - p} (\sigma^p + L^p). 
\end{align*}
\label{lemma:clip_grad}
\end{lemma}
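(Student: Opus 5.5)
The plan is to prove the two bounds separately, using only pointwise properties of the clipping operator and the moment assumptions. Write $\mu=\BE[g]$ and $\hat g=\mathrm{clip}(g,\tau)$. The starting observation is that $\hat g = g$ on the event $\{\Norm{g}\le\tau\}$. Hence
\begin{align*}
\BE[\hat g]-\BE[g]=\BE\big[(\hat g-g)\mathbf{1}\{\Norm{g}>\tau\}\big],
\end{align*}
and on that event $\Norm{\hat g-g}=\Norm{g}-\tau\le\Norm{g}$.

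For the bias bound, use the pointwise inequality $\Norm{g}\mathbf{1}\{\Norm{g}>\tau\}\le \Norm{g}^p/\tau^{p-1}$. This holds because $\Norm{g}/\tau>1$ on that event and $p-1>0$. It gives $\Norm{\BE[\hat g]-\BE[g]}\le \BE[\Norm{g}^p]/\tau^{p-1}$.

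It remains to bound the $p$-th moment of $g$. By convexity of $t\mapsto t^p$,
\begin{align*}
\Norm{g}^p\le (\Norm{g-\mu}+\Norm{\mu})^p\le 2^{p-1}\big(\Norm{g-\mu}^p+\Norm{\mu}^p\big).
\end{align*}
Taking expectations and using the two hypotheses gives $\BE[\Norm{g}^p]\le 2^{p-1}(\sigma^p+L^p)$, which yields the first claim.

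For the second-moment bound, note that $\Norm{\hat g}=\min(\Norm{g},\tau)$. We claim $\min(\Norm{g},\tau)^2\le \tau^{2-p}\Norm{g}^p$, since $2-p\ge0$.
\begin{itemize}
\item If $\Norm{g}\le\tau$, then $\Norm{g}^2=\Norm{g}^{2-p}\Norm{g}^p\le\tau^{2-p}\Norm{g}^p$.
\item If $\Norm{g}>\tau$, then $\tau^2=\tau^{2-p}\tau^p\le\tau^{2-p}\Norm{g}^p$.
\end{itemize}
Taking expectations and reusing $\BE[\Norm{g}^p]\le 2^{p-1}(\sigma^p+L^p)$ gives $\BE[\Norm{\hat g}^2]\le 2^{p-1}\tau^{2-p}(\sigma^p+L^p)$.

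The only delicate step is the moment bound for $\Norm{g}^p$. The constant $2^{p-1}$ comes from Jensen's inequality applied to the midpoint, $\big(\tfrac{a+b}{2}\big)^p\le \tfrac{a^p+b^p}{2}$. The rest is pointwise case analysis. The $g=0$ case is trivial: there $\hat g=0$ and every bound holds.
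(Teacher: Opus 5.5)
Your proof is correct and complete. The bias bound via $\Norm{g}\mathbf{1}\{\Norm{g}>\tau\}\le \Norm{g}^p/\tau^{p-1}$, the second-moment bound via $\min(\Norm{g},\tau)^2\le\tau^{2-p}\Norm{g}^p$, and the estimate $\BE[\Norm{g}^p]\le 2^{p-1}(\sigma^p+L^p)$ all check out. The paper gives no proof of its own here, since it imports the lemma from the cited reference, and your argument is the standard route that produces exactly these constants.
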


\section{Convergence Analysis of ZOCOON method}\label{appendix:proof_zocoon}
In this section, we provide both in-expectation and high-probability convergence guarantees of the ZOCOON method for solving nonconvex nonsmooth optimization problems.

\subsection{In-Expectation Guarantee}
We provide the proof of Lemma~\ref{thm:o2n_expect} as follows.

\begin{proof}
    Let $\Delta_n = x_{n} - x_{n-1}$, then we have
    \begin{align*}
        & h(x_M) - h(x_0) \\ 
        =& \sum_{n=1}^M \langle \BE_{w_n, z_n}[g_n], \Delta_n\rangle \\
        = & \underbrace{\sum_{n=1}^M \langle \hat{g}_n, \Delta_n - u_n \rangle}_{A_1} + \underbrace{\sum_{n=1}^M \langle \BE_{z_n}[g_n], u_n \rangle}_{A_2} + \underbrace{\sum_{n=1}^M \langle \BE_{w_n, z_n}[g_n] - \BE_{z_n}[g_n], \Delta_n \rangle}_{A_3}\\
       & + \underbrace{\sum_{n=1}^M \langle \BE_{z_n}[g_n] - \BE_{z_n}[\hat{g}_n], \Delta_n - u_n \rangle}_{A_4} + \underbrace{\sum_{n=1}^M \langle \BE_{z_n}[\hat{g}_n] - \hat{g}_n, \Delta_n - u_n \rangle}_{A_5},
    \end{align*}
    where $\{u_n\}$ is an arbitrary sequence of points whose norms are bounded by $D$.
    Specifically, we take
    \begin{align*}
        u_{(k-1)T + 1} = \dots = u_{kT} = u^k = - D \frac{\sum_{t=1}^T \nabla h(w_t^k)}{\Norm{\sum_{t=1}^T \nabla h(w_t^k)}}
    \end{align*}
    for $\forall k \in [K]$ throughout the analysis of the theorem. 
    In the following, we analyze each term separately.

    For the first term $A_1$, our choice of $\{u_n\}$ implies that
    \begin{align*}
        \sum_{n=1}^M \langle \hat{g}_n, \Delta_n - u_n\rangle = \sum_{k=1}^K \sum_{t=1}^T \langle \hat{g}_t^k, \Delta_t^k - u^k \rangle,
    \end{align*}

where $\hat{g}_t^k = \hat{g}_{(k-1)T + t}$ and $\Delta_t^k = \Delta_{(k-1)T + t}$.
We can interpret this term as optimizing the online linear functions $\ell_n(x) = \langle \hat{g}_n, x \rangle$ with $K$ restarts.
The standard OGD-style update of $\Delta_n$ with the fixed step size $\eta$ has the regret of
\begin{align*}
    \sum_{t=1}^T \ell_t(w_t) - \ell_t(u) \leq \frac{1}{2\eta} \Norm{u - w_1}^2 + \frac{\eta}{2} \sum_{t=1}^T\Norm{\nabla \ell_t(w_t)}^2
\end{align*}
for an arbitrary sequence of convex functions $\ell_t$, domain bounded sequence $\{w_t\}$ and $u$, according to \citet[Theorem 2.13]{orabona2019modern}.
Accordingly, we can show that
\begin{align*}
    \sum_{t=1}^T \langle \hat{g}_t^k, \Delta_t^k - u^k \rangle \leq \frac{2}{\eta} D^2 + \frac{\eta}{2} \sum_{t=1}^T \Norm{\hat{g}_t^k}^2
\end{align*}
for $T$ rounds within the $k$-th restart.
Note that $\Norm{\hat{g}_t^k} \leq \tau$ and $\BE_{z_n}[\Norm{\hat{g}_t^k}^2] \leq 2^{p-1} \tau^{2-p}(\sigma^p + L^p)$ according to Lemma \ref{lemma:clip_grad}, which implies that
\begin{align*}
    \sum_{t=1}^T \BE[ \langle \hat{g}_t^k, \Delta_t^k - u^k \rangle] \leq \frac{2}{\eta} D^2 + {2^{p-2} \tau^2 \eta} \leq \frac{2}{\eta} D^2 + { \tau^2 \eta},
\end{align*}
where the first inequality is due to the choice of $\tau = T^{1/p}(\sigma^p + L^p)^{1/p}$. 
Let $\eta = D / \tau$, we have
\begin{align*}
    \BE[A_1] = \sum_{k=1}^K \sum_{t=1}^T \BE[\langle \hat{g}_t^k, \Delta_t^k - u^k \rangle] \leq 3 D K \tau.
\end{align*}

For the term $A_2$, our choice of $u_n$ implies that
\begin{align*}
    \BE[A_2]  = \sum_{n=1}^M \langle \BE_{z_n}[g_n], u_n \rangle = - \BE\left[ D T \sum_{k=1}^K \Norm{\frac{1}{T}\sum_{t=1}^T \nabla h(w_t^k)}\right].
\end{align*}

For the term $A_4$, we can show that
\begin{align*}
    \BE[A_4] = \sum_{n=1}^M \langle \BE_{z_n}[g_n] - \BE_{z_n}[\hat{g}_n], \Delta_n - u_n \rangle \leq &  \sum_{n=1}^M \Norm{\BE_{z_n}[g_n] - \BE_{z_n}[\hat{g}_n]}\Norm{\Delta_n - u_n} \\
    \leq & DKT \frac{2^p (\sigma^p + L^p)}{\tau^{p-1}}.
\end{align*}
where the first inequality is due to Cauchy--Schwarz inequality, and the second inequality follows from Lemma \ref{lemma:clip_grad}.

For the terms $A_3$ and $A_5$, it is straightforward to see that
\begin{align*}
    \BE[A_3] = \BE[A_5] = 0.
\end{align*}
Putting everything together, we can show that
\begin{align*}
    \frac{1}{K}\sum_{k=1}^K\BE \left[\Norm{\frac{1}{T}\sum_{t=1}^T \nabla h(w_t^k)} \right] \leq &\frac{\BE[h(x_0) - h(x_M)]}{D M} + \frac{3 \tau}{T} + \frac{2^p (\sigma^p + L^p)}{\tau^{p-1}} \\
    \leq& \frac{\BE[h(x_0) - h(x_M)]}{D M} + \frac{8 \tau}{T}\\
     = & \frac{\BE[h(x_0) - h(x_M) ] T}{\delta M} + \frac{8 (\sigma^p + L^p)^{\frac{1}{p}}}{T^{1 - \frac{1}{p}}}.
\end{align*}

By taking
\begin{align*}
T = \delta^{\frac{p}{2p-1}} M^{\frac{p}{2p-1}}(\sigma^p + L^p)^\frac{1}{2p-1}\BE[h(x_0) - h(x_M)]^{-\frac{p}{2p-1}},
\end{align*}
we conclude with
\begin{align*}
    \BE[\Norm{\nabla h(w_{\rm out})}_{\delta'}] \leq & \frac{1}{K}\sum_{k=1}^K \BE\left[\Norm{ \nabla h(\bar{w}^k)}\right] \\ 
    = &\frac{1}{K}\sum_{k=1}^K \BE\left[\Norm{\frac{1}{T}\sum_{t=1}^T \nabla h(w_t^k)}\right] \\ 
    \leq & \max\left(\frac{9\BE[h(x_0) - h(x_M) ]^{\frac{p-1}{2p-1}} (\sigma^p + L^p)^{\frac{1}{2p-1}}}{(\delta M)^{\frac{p-1}{2p-1}}}, \frac{16 (\sigma^p + L^p)^{\frac{1}{p}}}{M^\frac{p-1}{p}} \right) + \frac{h(x_0) - h(x_M)}{\delta M}.
\end{align*}
where the first inequality is due to Jensen's inequality.
\end{proof}

\subsection{High-Probability Guarantee}\label{appendix:high_prob_res}
Before presenting the high-probability convergence guarantee of the ZOCOON algorithm, we first establish the complexity result for the clipped online-to-nonconvex method with a general gradient estimator (Algorithm~\ref{alg:o2n}), stated as follows.
\begin{lemma}[\citet{liu2024high}]
    Let $\delta', \epsilon \in (0, 1)$, and let $h: \BR^d \to \BR$ be an $L$-Lipschitz function and $h^* = \inf_{x \in \BR^d} h(x)$ such that $h(x_0) - h^* \leq \Delta_h$.
    Suppose that ${\rm GradientEstimator}(x, \delta', \xi)$ returns an unbiased gradient estimator of $h(x)$ which satisfies that
    \begin{align*}
       \BE[{\rm GradientEstimator}(x, \delta', \xi) ]=  \nabla h(x) \quad {\rm and} \quad \BE[\Norm{{\rm GradientEstimator}(x, \delta', \xi) - \nabla h(x)}^p] \leq \sigma^p.
    \end{align*}
    By setting parameters
    \begin{align*}
        &\tau = T^{\frac{1}{p}}(\sigma^p + L^p)^{\frac{1}{p}} \left(\log\left(\frac{2 K}{q}\right)\right)^{- \frac{1}{p}}, ~~~ T = \min\left(\ceil{\left(\frac{A M \delta'}{h(x_0) - \inf h(x)}\right)^{\frac{p}{2p-1}}}, \frac{M}{2}\right),  ~~~ K = \floor{\frac{M}{T}}, ~~~D = \frac{\delta'}{T},
    \end{align*}
    then with probability at least $1-q$, Algorithm \ref{alg:o2n} ensures
    \begin{align*}
        \frac{1}{K} \sum_{k=1}^K \Norm{\nabla h(\Bar{w}^k)}_{\delta'} \leq \frac{2 \Delta_h}{\delta' M} + \frac{2 B}{\sqrt{M}} + \max\left(\frac{4A^{\frac{p}{2p-1}} \Delta_h^{\frac{p-1}{2 p - 1}}} {(\delta' M)^{\frac{p-1}{2p-1}}}, \frac{4A}{M^{\frac{p-1}{p}}}\right),
    \end{align*}
    where 
    \begin{align*}
        A = (\sigma^p + L^p)^{\frac{1}{p}} \log\left(\frac{2K}{q}\right)^{- \frac{1}{p}} \left( 8 + \frac{29}{2} \log \left(\frac{2K}{q} \right)\right), ~~~ B = \sqrt{8 \log\left(\frac{6}{q}\right)}.
    \end{align*}
    \label{thm:o2n_high_prob}
\end{lemma}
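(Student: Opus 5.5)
The plan is to run the same decomposition as in the proof of Lemma~\ref{thm:o2n_expect} and replace each expectation bound by a high-probability bound. Write
\begin{align*}
h(x_M)-h(x_0)=\sum_{n=1}^M\langle \BE_{s_n}[\nabla h(w_n)],\Delta_n\rangle ,
\end{align*}
which holds by Assumption~\ref{asm:well}, since $w_n$ is uniform on the segment $[x_{n-1},x_n]$. The comparator is the blockwise $u^k=-D\sum_t\nabla h(w_t^k)/\Norm{\sum_t\nabla h(w_t^k)}$. We then split
\begin{align*}
\langle \nabla h(w_n),\Delta_n\rangle=\langle \hat g_n,\Delta_n-u_n\rangle+\langle\nabla h(w_n),u_n\rangle+\langle \nabla h(w_n)-\hat g_n,\Delta_n-u_n\rangle,
\end{align*}
together with the martingale term $\langle \BE_{s_n}[\nabla h(w_n)]-\nabla h(w_n),\Delta_n\rangle$.

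\textbf{The regret and comparator terms.} The regret term is handled deterministically. The clipped gradients satisfy $\Norm{\hat g_n}\le\tau$, so OGD with $\eta=D/\tau$ gives at most $3D\tau$ per block, hence $3DK\tau$ in total. The comparator term equals exactly $-DT\sum_k\Norm{\frac1T\sum_t\nabla h(w_t^k)}$; this is what we want to bound.

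\textbf{The randomized-point term.} This term, $\langle \BE_{s_n}[\nabla h(w_n)]-\nabla h(w_n),\Delta_n\rangle$, is a martingale difference bounded by $2LD$. By Azuma--Hoeffding it is at most $2LD\sqrt{2M\log(3/q)}$ with probability $1-q/3$.

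\textbf{The clipping term.} Split $\nabla h(w_n)-\hat g_n$ into a bias part $\nabla h(w_n)-\BE_{\xi_n}[\hat g_n]$ and a noise part $\BE_{\xi_n}[\hat g_n]-\hat g_n$. The bias part is bounded by Lemma~\ref{lemma:clip_grad} as $2^p(\sigma^p+L^p)/\tau^{p-1}$ times $2D$. The noise part is a martingale difference with increments bounded by $2\tau\cdot 2D$. Its conditional variance is at most $4D^2\cdot 2^{p-1}\tau^{2-p}(\sigma^p+L^p)$, again by Lemma~\ref{lemma:clip_grad}. We apply Freedman's inequality within each block and union bound over the $K$ blocks. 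The per-block deviation is then $O\big(D\tau\log(2K/q)+D\sqrt{T\tau^{2-p}(\sigma^p+L^p)\log(2K/q)}\big)$. With $\tau^p=T(\sigma^p+L^p)/\log(2K/q)$, the variance term becomes $O(D\tau)$ per block up to constants.

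The main obstacle is this step: choosing $\tau$ so that the bias $T D(\sigma^p+L^p)/\tau^{p-1}$, the Freedman range term $D\tau\log(2K/q)$, and the variance term all balance at order $D T^{1/p}(\sigma^p+L^p)^{1/p}\log(2K/q)^{1-1/p}$ per block. Collecting these constants is what produces $A$ with the factor $8+\frac{29}{2}\log(2K/q)$.

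\textbf{Assembling the bound.} Summing everything and dividing by $DTK=DM'$ (with $M'=KT\ge M/2$) gives
\begin{align*}
\frac1K\sum_k\Norm{\nabla h(\bar w^k)}_{\delta'}\le\frac1K\sum_k\Norm{\tfrac1T\textstyle\sum_t\nabla h(w_t^k)}\le \frac{\Delta_h T}{\delta' M'}+\frac{A}{T^{1-1/p}}+\frac{B}{\sqrt{M}}\cdot O(1).
\end{align*}
The first inequality uses $\Norm{w_t^k-\bar w^k}\le TD=\delta'$, so that the average of $\nabla h(w_t^k)$ lies in $\partial_{\delta'}h(\bar w^k)$. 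For the final step, substitute $D=\delta'/T$ and the stated $T$. If the ceiling is active, the two terms balance to $A^{p/(2p-1)}\Delta_h^{(p-1)/(2p-1)}(\delta' M)^{-(p-1)/(2p-1)}$, plus a rounding term of order $\Delta_h/(\delta' M)$. If instead $T=M/2$, the second term gives $4A/M^{(p-1)/p}$. Taking the maximum and applying a union bound over the two failure events (total probability $q$) yields the claim.
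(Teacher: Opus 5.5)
Note that the paper does not prove this lemma; it imports it from \citet{liu2024high}. The closest in-paper template is the in-expectation proof of Lemma~\ref{thm:o2n_expect}. Your architecture matches that template: the O2NC decomposition, Lemma~\ref{lemma:clip_grad} for the clipping bias, Azuma for the $s_n$-randomization, and per-block concentration with a union bound over blocks. But two steps fail as written.

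First, the regret term is not deterministically $3D\tau$ per block. With $\eta=D/\tau$, OGD gives $\frac{2D^2}{\eta}+\frac{\eta}{2}\sum_{t=1}^T\Norm{\hat g_t^k}^2$. The almost-sure bound $\Norm{\hat g_t^k}\le\tau$ only yields $2D\tau+\frac{1}{2}DT\tau$. After dividing by $DT$ this leaves $\tau/2$, which grows like $T^{1/p}$ and destroys the rate. The $3D\tau$ in the in-expectation proof comes from $\BE\Norm{\hat g_n}^2\le 2^{p-1}\tau^{2-p}(\sigma^p+L^p)$ together with the choice of $\tau$, so that $\sum_t\BE\Norm{\hat g_t^k}^2=O(\tau^2)$. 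A high-probability statement needs an extra concentration step. The summands $\Norm{\hat g_t^k}^2$ lie in $[0,\tau^2]$ with conditional mean at most $2^{p-1}\tau^{2-p}(\sigma^p+L^p)$. Freedman per block plus a union bound over $k$ then gives $\sum_t\Norm{\hat g_t^k}^2=O(\tau^2\log(2K/q))$ when $\tau^p=T(\sigma^p+L^p)/\log(2K/q)$. Hence the regret is $O(D\tau\log(2K/q))$ per block. This is one place where the $\log(2K/q)^{-1/p}$ factor in $\tau$ is genuinely needed, and these events must enter both your union bound and your constant $A$.

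Second, $\langle\BE_{\xi_n}[\hat g_n]-\hat g_n,\Delta_n-u_n\rangle$ is not a martingale difference sequence, so scalar Freedman does not apply to it. The comparator $u_n=u^k$ is built from $\nabla h(w_t^k)$ for all $t\in[T]$. The later points of the block depend on $\hat g_n$ through $\Delta_{n+1},\Delta_{n+2},\dots$, so $u_n$ is not determined by the past at time $n$. (The paper's in-expectation proof glosses over the same point when it sets $\BE[A_5]=0$.) The regret bound is unaffected, since it holds for all $u\in\sB(0,D)$ simultaneously, and the bias part is fine by Cauchy--Schwarz.

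The fix is as follows. With $\zeta_n=\BE_{\xi_n}[\hat g_n]-\hat g_n$, split off $\sum_n\langle\zeta_n,\Delta_n\rangle$, which is a genuine martingale because $\Delta_n$ is predictable. Bound the remainder blockwise by $|\langle\sum_t\zeta_t^k,u^k\rangle|\le D\Norm{\sum_{t=1}^T\zeta_t^k}$. This needs a dimension-free concentration inequality for vector-valued martingales with bounded increments (a Pinelis-type inequality in Hilbert space). With $\Norm{\zeta_t^k}\le 2\tau$ and conditional second moment at most $2^{p-1}\tau^{2-p}(\sigma^p+L^p)$, it gives $\Norm{\sum_t\zeta_t^k}=O(\tau\log(2K/q))$ per block. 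So the rate survives, but via a different tool than the one you invoke.

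Finally, your Azuma step yields a term of order $L\sqrt{\log(1/q)/M}$, not the $L$-free $2B/\sqrt{M}$ printed in the statement. The constant $8+\frac{29}{2}\log(2K/q)$ is also asserted rather than derived. So your sketch does not establish the lemma's explicit bound.
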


We next establish the high-probability convergence guarantee of the ZOCOON algorithm through the proof of Theorem~\ref{thm:high_prob_res}.

\begin{proof}
According to Lemma~\ref{lemma:zero_order_prop}, Algorithm \ref{alg:grad} with $\xi\sim\fP$ will output the gradient estimate
\begin{align*}
g = \frac{d}{2 \delta'}(F(x + \delta' w; \xi) - F(x - \delta' w; \xi))w    
\end{align*}
that satisfies $\BE[g] = \nabla f_{\delta'}(x)$ and $\BE[\Norm{g-\nabla f_{\delta'}(x)}^p]= \fO(d^{\frac{p}{2}}L^p)$.

Then, running Algorithm~\ref{alg:o2n} by taking  Algorithm~\ref{alg:grad} as the subroutine  
${\rm GradientEstimator}(x, \delta', \xi)$, 
which satisfies the conditions
$\BE[{\rm GradientEstimator}(x, \delta', \xi) ] = \nabla h(x)$ and 
\begin{align*}
     \BE[\Norm{{\rm GradientEstimator}(x, \delta', \xi) - \nabla h(x)}^p] \leq \sigma^p,
\end{align*}
where $h = f_{\delta'}$ and $\sigma = \fO(\sqrt{d} L)$.
In the view of Lemma~\ref{thm:o2n_expect} by taking $\delta'=\delta/2$, then with probability at least $1-q$, it holds that
\begin{align}\label{high_prob_converg_main}
\begin{split}
       \Norm{\nabla f_{\delta / 2} (w_{\rm out})}_{\delta / 2} \leq \frac{1}{K} \sum_{k=1}^K \Norm{\nabla f_{\delta/2}(\Bar{w}^k)}_{\delta/2} \leq \frac{4 \Delta_{f_{\delta/2}}}{\delta M} + \frac{2 B}{\sqrt{M}} + \max\left(\frac{4 A^{\frac{p}{2p-1}} (\Delta_{f_{\delta/2}})^{\frac{p-1}{2 p - 1}}} {(\delta M / 2)^{\frac{p-1}{2p-1}}}, \frac{4A}{M^{\frac{p-1}{p}}}\right) \coloneqq \epsilon',
\end{split}
\end{align}
 where 
    \begin{align*}
        A = (d^{\frac{p}{2}} L^p + L^p)^{\frac{1}{p}} \log\left(\frac{2K}{q}\right)^{- \frac{1}{p}} \left( 8 + \frac{29}{2} \log \left(\frac{2K}{q} \right)\right), ~~~ B = \sqrt{8 \log\left(\frac{6}{q}\right)}, ~~~\Delta_{f_{\delta/2}}=f_{\delta/2}(x_0) - \inf_{x \in \BR^d} f_{\delta/ 2}(x).
    \end{align*}
Hence, the output $w_{\rm out}$ generated from Algorithm \ref{alg:o2n} is a $(\delta / 2, \epsilon')$ Goldstein stationary point of $f_{\delta/ 2}$.
Consequently, Lemma \ref{lemma:nonsmooth_smooth_conversion} implies the output $w_{\rm out}$ is a $(\delta, \epsilon')$-Goldstein stationary point of $f$ with probability at least $1-q$, 
\begin{align}\label{high_prob_converg_main2}
   & \Norm{\nabla f(w_{\rm out})}_{\delta}
   \leq  \epsilon'.
\end{align}
Comparing equations (\ref{high_prob_converg_main2}) with the desired result of Theorem~\ref{thm:o2n_expect}, we only need to show that $\Delta_{f_{\delta/2}}\leq\delta L + \Delta$.
This can be verified as follows
\begin{align*}
\Delta_{f_{\delta/2}} 
= &f_{\delta/2}(x_0) -  f_{\delta/2}^*\\
= & (f_{\delta/2}(x_0) - f(x_0)) + (f(x_0)- f^*) + (f^* -  f_{\delta/2}^*) \\
\leq &  \frac{\delta L}{2} + \Delta + \frac{\delta L}{2} = \delta L + \Delta,
\end{align*}
where the inequality is based on result $\Norm{f - f_{\delta/2}}_\infty \leq {\delta L}/{2}$ from Lemma~\ref{smooth_lemma} and the setting of $\Delta$.

\end{proof}

\end{document}